\pdfoutput=1
\documentclass{article} 
\usepackage{preprint,times}

\usepackage{amsmath,amsfonts,bm}

\def\eqref#1{equation~\ref{#1}}
\def\1{\bm{1}}

\DeclareMathAlphabet{\mathsfit}{\encodingdefault}{\sfdefault}{m}{sl}
\SetMathAlphabet{\mathsfit}{bold}{\encodingdefault}{\sfdefault}{bx}{n}

\usepackage{mathrsfs}
\usepackage{amsthm}
\usepackage{hyperref}
\usepackage{url}
\usepackage{enumitem}
\usepackage{booktabs}
\usepackage{multirow}
\usepackage{graphicx}
\usepackage{float}

\title{Detecting Temporal Concepts in Video \\Transformers via a No-False-Positives Test}

\author{Jianghuai Li\thanks{Equal contribution.} \\
University of Chicago \\
\texttt{jianghuai@uchicago.edu}
\and
Andrew Ren\footnotemark[1] \\
University of Chicago \\
\texttt{renqy@uchicago.edu}
\and
Greg Shakhnarovich \\
Toyota Technological Institute at Chicago \\
\texttt{greg@ttic.edu}
}

\newtheorem{theorem}{Theorem}
\theoremstyle{remark}
\newtheorem{remark}{Remark}
\theoremstyle{definition}
\newtheorem{definition}{Definition}

\begin{document}

\maketitle

\begin{abstract}
Video models have the architecture to support encoding temporal aspects of the input, i.e. concepts that require more than one frame to define, like object movement. However, because motion and appearance change are entangled, a feature that appears to encode temporal concepts like object movement may instead encode non-temporal concepts like spatial presence at different positions whose activations merely correlate with motion. To resolve this, we introduce the No-False-Positives (NFP) test, a statistical test with a formal guarantee of statistical specificity for identifying features encoding temporal concepts. Using specially designed stimuli, we apply this test to representations produced by VideoMAE, V-JEPA2, and TimeSformer, and observe that temporal features exist in each model. We validate the test first by showing it yields no temporal features for the DINOv2 image-encoder. Then, using a synthetic representation space, we show that detection rate increases with a direction's projection onto the temporal subspace, and that Sparse Autoencoder features test-flagged as temporal project much more of their norm onto the temporal subspace than unflagged features. Interventions on temporal SAE features yield mixed results across models, suggesting that intervening on temporal features may not outperform a non-temporal feature baseline.
\end{abstract}

\section{Introduction and motivation}
\label{sec:intro}

The success of modern video models on action recognition, temporal ordering and direction identification benchmarks suggests that these models capture temporal information in some form. However, how temporal information is internally organized within these models remains poorly understood. Better understanding could provide improved generalization in tasks requiring temporal reasoning, and open avenues into controllability through model editing and steering. In this paper, we define a feature as a single direction in the representation space, and use these two terms interchangeably. A feature is \emph{temporal} if its content necessarily derives from multiple frames, and \emph{static} if it can be evaluated from a single frame. We are interested in the question: are there temporal features encoded by modern video models in representation space?

This question is difficult to answer because motion and appearance changes are coupled. We call this the \textbf{coupling problem}. For example, an object moving leftwards from position $x$ to $x-\delta$ necessarily produces frames that look different, so a model that appears to understand leftward motion could have features encoding leftward motion (temporal features), or instead a collection of non-temporal concepts ``Object at position $x$'', ``Object at position $x-\delta$'' that together achieve the same series of feature activations and downstream effects. Thus, for any candidate temporal feature, a static-features-only explanation, relying on properties that merely correlate with motion, can plausibly be constructed consistent with observed model behavior. To our knowledge, no prior work has posed this question explicitly, or provided a rigorous method to distinguish between these two possibilities.

We believe the coupling problem is, at its core, a false-positives problem: a test risks misidentifying as temporal a set of static features whose activations or downstream effects happen to match. We need a test with a no-false-positives (NFP) property, so that a feature flagged as temporal is provably not static. We present the first such test (to our knowledge), and deploy it to analyze VideoMAE~\citep{tong2022videomae}, V-JEPA2~\citep{assran2025vjepa2} and TimeSformer~\citep{bertasius2021space}.

There are two key insights enabling our work. First, by using a synthetic set of videos, we can control for what varies across frames, eliminating confounders that would arise in natural video. We use videos of a single white ball (disk) moving on a uniform gray background. The ball's position is the only thing that changes between frames. Thus if a feature's activation varies across frames, that variation must be due to either the ball's current position or its motion. Second, if we maintain a constant distribution of ball positions over the visible screen across every time step $t\in[1, T]$, then any feature whose activation depends only on the ball’s current position, i.e. a static feature, will have the same expected activation at every time step. A value that is constant in expectation across time cannot co-vary with anything, so the expected covariance between any static feature and any temporal concept $\tau$ is exactly zero. By contrapositive, any feature whose activation has nonzero expected covariance with $\tau$ must be non-static. This gives us a setup such that any feature flagged as temporal is provably non-static. We acknowledge there may exist other tests that achieve this no-false-positives property, but none to our knowledge have been developed.

Besides applying our test to analyze video encoders,
we validate the test first by showing it yields no temporal features for the DINOv2 image-encoder. Then, using a synthetic representation space, we show that detection rate increases with a direction's projection onto the temporal subspace, and that Sparse Autoencoder features test-flagged as temporal project their norm onto the planted subspace $198\times$ more than unflagged features. Finally, to attempt a concrete use case, we intervene on the temporal SAE features identified by erasing their span, which yields mixed results across models on SSv2 classification compared to a baseline erasure of static features. This suggests intervening on temporal features does not generally outperform baselines on temporally demanding tasks.

We also present a generalized NFP framework in Appendix~\ref{app:generalNFPframework} beyond the setup of motion features on a moving ball as presented in this paper, and show how the NFP framework can be applied to arbitrary directions as part of an optimization procedure in Appendix~\ref{app:decorrelatestimulus}. 

\section{Existing work on temporal understanding in models}

Prior work has investigated temporal understanding in video models at two levels, neither of which resolves the coupling problem.
Work at the {\bf model level} addresses whether the model as a whole can utilize temporal information present, which is well-studied and broadly affirmed. Work here does not address how temporal information is internally organized. Benchmark evaluations \citep{goyal2017ssv2, resound, onlytimecantell, yuan2024videoglue}, temporal probing tasks \citep{Dave2023NoMS, ChantrySchiappa2022ALR}, and tasks on action direction and ordering \citep{Shao2020FineGymAH, Truong2022DirecFormerAD} all operate at this level.

As an example of the limitations of prior work at this level, \cite{sun2023mme} uses a lack of change in linear probing performance when comparing pretraining with frame shuffled vs original videos as evidence that the VideoMAE model pays little attention to temporal information when performing its reconstruction pre-training task. However, this doesn't clarify if temporal concepts were learned and just not used for their chosen task, or if VideoMAE somehow failed to learn temporal concepts altogether. We show that VideoMAE does encode temporal concepts in representation space. 

At the {\bf representation level}, the question becomes whether temporal concepts like speed are decodable (recoverable with some probe) from a model's whole video representation at a given layer. We see works attempting to separate appearance and motion information \citep{Lee2025, Kowal2022ADD}, probes for geometric and physical properties \citep{Huang2025HowM3, Punzo2026HowDV}, and methods for constructing time-sensitive video representations \citep{Bagad25, Yun2022TimeIM, Ranasinghe2021SelfsupervisedVT, gupta2023siamese}. A study by \cite{joseph2026interpreting} constructed synthetic ball datasets with controlled velocity and acceleration and probed the intermediate representations of V-JEPA2 and VideoMAE-v2. They found layers where motion direction becomes linearly decodable. This is suggestive, but linear decodability does not resolve the coupling problem. A linear probe that recovers speed from the whole video representation at some layer could be reading spatial position patterns like ``Object at position $x$'', ``Object at position $x-\delta$'' that correlate with motion, rather than encodings of speed itself.

In contrast, our work operates at the {\bf feature level}. Prior work at this level examines the use of SAEs in visual models \citep{pach2025sparse,dokme2026interpreting}. We focus on asking whether there are temporal features in video models, and use SAEs as an application use case. An example of another possible question at this level, out of the scope of this paper, would be whether there are non-linear encodings of temporal concepts in representation space. 

\section{Theoretical Framework}
\label{section:theoreticalframework}
 
A \textbf{video} is a sequence of $T$ frames, $V = (f_1, \ldots, f_T)$, where each frame $f_t : \{1, \ldots, H\} \times \{1, \ldots, W\} \to \mathbb{R}^3$ assigns an RGB value to each pixel. We construct synthetic videos of a ball moving on a plain canvas parameterized by:
\begin{itemize}[leftmargin=*, itemsep=1pt, topsep=0pt, parsep=0pt, partopsep=0pt]
    \item Starting position: $(x_0(V), y_0(V)) \in \mathbb{R}^2$
    \item Motion profile: any sequence specifying ball movement, e.g. a velocity profile or a position sequence $(p_0, ...,p_T)$.
\end{itemize}
We use velocity profile as the motion profile for this paper: $v_x(V, \cdot) = (v_x(V, 1), \ldots, v_x(V, T))$ and $v_y(V, \cdot) = (v_y(V, 1), \ldots, v_y(V, T))$. The ball position at frame $t$ is:
\begin{equation}
    x_t(V) = x_0(V) + \sum\nolimits_{j=1}^{t} v_x(V, j), \qquad y_t(V) = y_0(V) + \sum\nolimits_{j=1}^{t} v_y(V, j)
\end{equation}
Since frame $f_t(V)$ is a uniform gray background with a white circle of fixed radius $r$ centered at $(x_t(V), y_t(V))$, the frame content is \textbf{fully determined by the ball position $(x_t, y_t)$}.

\textbf{Tokenization}. 
The video is divided into non-overlapping spatiotemporal patches (tubelets). Each token $j \in \{1, \ldots, J\}$ covers a spatial region of $h \times w$ pixels across $\ell_p$ consecutive frames. We assume $\ell_p=1$ for now, and address the implications of $\ell_p>1$ in Appendix~\ref{app:ellp>1}.

\textbf{Representation and ball-tracking feature activation}.
Consider a video transformer with $L$ layers. After layer $l$, token $j$ has representation $r_j^l(V) \in \mathbb{R}^d$. This is a function of the entire video $V$, because spatiotemporal attention allows information flow across spatial positions and frames. 

A direction in representation space is a vector $w \in \mathbb{R}^d$. Its activation at token $j$ for video $V$ is
$$\phi_w(V, j) = w \cdot r^l_j(V)$$
For a given video $V$ and frame $t$, define $b(V, t)$ as the index $j$ of the token whose spatial region contains the ball's center at frame $t$, then the ball-tracking activation of direction $w$ is $\phi_w(V, b(V,t))$. We then define for any function $F: \mathbb{R}\to\mathbb{R}$:
\begin{equation}
  \psi_w(V, t) = F(\phi_w(V, b(V, t)))  
\end{equation}
The resulting time series $(\psi_w(V, 1), ..., \psi_w(V, T))$ is the series we use to compare against temporal concepts. In Section~\ref{sec:nfp}, we set $F$ as the identity for dimension axes, and ReLU-based functions for SAEs.

\subsection{Definitions}

\begin{definition}[Static Feature]
A direction $w$ is static if there exists a function $g : \mathbb{R}^2 \to \mathbb{R}$ such that:
\begin{equation}
    \phi_w(V, b(V, t)) = g(x_t(V), y_t(V))
\end{equation}
for all videos $V$ in the stimulus set and all $t \in \{1, \ldots, T\}$. 
\end{definition}
$\phi_w(V, b(V,t))$ is in general a function of $V, t$ because of cross-frame attention, so this definition asks whether this can be reduced to a function of just the quantity $(x_t, y_t)$.
This definition makes sense because the frame content of our synthetic videos is fully determined by the ball position $(x_t,y_t)$. As defined, a static feature responds only to the visual content of the current frame, without using information from other frames or the frame index.

This also means that for any $F$, we can define $G=F\circ g:\mathbb{R}^2\to\mathbb{R}$, such that for static directions $w$, $\psi_w(V,t)=F(g(x_t, y_t))=G(x_t,y_t)$. We use $G$ in subsequent sections for generality, and observe that $G = g$ is a special case of $F$ as the identity.

\begin{definition}[Temporal concept profile]
    A temporal concept profile is a sequence $(\tau(V, 1), \ldots, \tau(V, T))$ that describes a per-frame temporal property of video $V$. For example:
\end{definition}
\begin{tabular}{@{}l@{\quad}l@{}}
  Instantaneous speed: & $\tau(V, t) = \sqrt{v_x(V,t)^2 + v_y(V,t)^2}$ \\
  Horizontal velocity: & $\tau(V, t) = v_x(V, t)$ \\
  Acceleration magnitude: & $\tau(V, t) = \|v(V,t{+}1) - v(V,t)\|$ \\
  Direction of motion: & $\tau(V, t) = \mathrm{atan2}(v_y(V,t),\, v_x(V,t))$
\end{tabular}

Since the only varying temporal content in the synthetic videos is that of the ball's motion, $\tau(V, t)$ is determined by the motion profile of $V$. We note that any given temporal concept profile is in general \textbf{not} the motion profile.

\subsection{Stimulus Set}

The stimulus set of synthetic videos $S = \{V_1, \ldots, V_N\}$ is constructed with the following properties.

{\bf (S1) Starting position}. $x_0(V) \sim \text{Uniform}[a_x, b_x]$ and $y_0(V) \sim \text{Uniform}[a_y, b_y]$, drawn independently across videos. The intervals $[a_x, b_x]$ and $[a_y, b_y]$ must satisfy the coverage condition {\bf (S5)}.

{\bf (S2) Motion profiles}. Each video is assigned a velocity profile $(v_x(V, \cdot), v_y(V, \cdot))$ from a distribution of profiles. We show later that to prove the existence of a temporal concept in representation space, a single velocity profile for all videos is sufficient, but to give evidence for a specific direction's association with a specific temporal concept, we need different profiles to allow for decorrelated stimulus (Appendix~\ref{app:decorrelatestimulus}).

{\bf (S3) Independence}. The starting position $(x_0, y_0)$ and the motion profile are mutually independent. Since the temporal concept profile $\tau(V, t)$ is determined by the motion profile, this implies $\tau(V, t)$ is also independent of $(x_0, y_0)$.

{\bf (S4) Off-screen zeroing and compact support}. When the ball center $(x_t,y_t)$ falls outside the visible screen $[0,H]^2$ for our stimulus set, where $H$ is the width and height of the visible screen, there is no ball-containing token, so we set $\psi_w(V,t)=0$ for all directions $w$ as convention. For static features 
this implies compact support, i.e. $G(x,y)=0$ outside $[0,H]^2$. For temporal features this may discard signal, reducing power but not affecting the NFP guarantee.

{\bf (S5) Coverage condition}. For every frame $t$ of every video in $S$, the set of possible ball positions must contain the visible screen $[0, H]^2$. In practice, we compute the cumulative displacements $s_x(V, t) = \sum_{j=1}^t v_x(V, j)$ and $s_y(V, t) = \sum_{j=1}^t v_y(V, j)$, and denote
$$s_x^{\max} = \max_{V \in S,\; t \in \{1, \ldots, T\}} |s_x(V, t)|, \qquad s_y^{\max} = \max_{V \in S,\; t \in \{1, \ldots, T\}} |s_y(V, t)|$$
as the maximum absolute cumulative displacements across all $t$ and all $V\in S$. Then for a visible screen of $[0, H]^2$, the coverage condition requires a ball spawning area $[a_x, b_x]\times[a_y, b_y]$ such that
\begin{equation}
    a_x\leq0-s^{\max}_x,\quad a_y\leq0-s^{\max}_y,\quad b_x\geq H+s^{\max}_x,\quad b_y\geq H+s^{\max}_y
\end{equation}
This means the ball must be able to spawn off-screen, and such videos must be included in the stimulus set, even when the ball never enters the visible frame. 

In short, (S4) and (S5) ensure that the expected activation for static features $\mathbb{E}_{x_0, y_0}[G(x_t,y_t)]$ is the same for every frame $t$, regardless of motion profile; more details can be found in Appendix~\ref{app:compactsupportcoveragecondition}.

\subsection{Test Statistic}

For each video $V$, the within-video covariance between the ball-tracking activation and the temporal concept profile is:
\begin{equation}
    C_w(V) = \text{Cov}_t(\psi_w(V, t),\; \tau(V, t)) = \frac{1}{T}\sum_{t=1}^{T} (\psi_w(V, t) - \bar{\psi}_w(V))(\tau(V, t) - \bar{\tau}(V))
\end{equation}
where $\bar{\psi}_w(V) = \frac{1}{T}\sum_t \psi_w(V, t)$ and $\bar{\tau}(V) = \frac{1}{T}\sum_t \tau(V, t)$.
Note that we use covariance rather than Pearson correlation. In fact, the NFP property is false for Pearson correlation (See Appendix~\ref{app:corrcounterexample}).

\subsection{Proving the NFP property}

\begin{theorem}[No False Positives]
\label{thm:main}
    Let direction $w$ be static: $\psi_w(V, t) = G(x_t(V), y_t(V))$ for some function $G: \mathbb{R}^2 \to \mathbb{R}$ with compact support. Let the stimulus set $S$ satisfy conditions (S1)  - (S5). Then for any temporal concept profile $\tau$ that is independent of $(x_0, y_0)$,
    \begin{equation}
        \mathbb{E}_{x_0, y_0} [C_w(V) \mid \text{Velocity Profile } \pi] = 0
    \end{equation}
    for every fixed velocity profile (motion profile). Consequently, by the law of large numbers:
    \begin{equation}
        \frac{1}{N}\sum_{V \in S} C_w(V) \xrightarrow{N \to \infty} 0
    \end{equation}
\end{theorem}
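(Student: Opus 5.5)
Conditioning on a fixed velocity profile $\pi$ makes the displacements $s_x(t), s_y(t)$ and the concept values $\tau(t)$ deterministic. Only the starting position $(x_0,y_0)$ remains random, and by (S3) it is still uniform on $[a_x,b_x]\times[a_y,b_y]$. The plan has three steps: expand $C_w(V)$ so that $\tau$ enters only as fixed coefficients, show that the expected activation is the same at every frame, and then use the fact that the centered $\tau$ weights sum to zero.

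\textbf{Step 1 (linearity).} First I rewrite the covariance as
\[
C_w(V)=\frac1T\sum_{t=1}^T \psi_w(V,t)\bigl(\tau(t)-\bar\tau\bigr).
\]
This holds because $\sum_t(\tau(t)-\bar\tau)=0$, so the $\bar\psi_w$ term drops out. Conditional on $\pi$, each weight $\tau(t)-\bar\tau$ is a constant. Linearity of expectation then gives
\[
\mathbb{E}[C_w\mid\pi]=\frac1T\sum_t\bigl(\tau(t)-\bar\tau\bigr)\,\mathbb{E}_{x_0,y_0}\bigl[G(x_0+s_x(t),\,y_0+s_y(t))\bigr].
\]

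\textbf{Step 2 (time-invariance of the mean).} This is the main step. I will show that $m(t):=\mathbb{E}[G(x_0+s_x(t),y_0+s_y(t))]$ does not depend on $t$. Write $A=(b_x-a_x)(b_y-a_y)$. Changing variables $u=x_0+s_x(t)$, $v=y_0+s_y(t)$ gives
\[
m(t)=\frac1A\int_{a_x+s_x(t)}^{b_x+s_x(t)}\int_{a_y+s_y(t)}^{b_y+s_y(t)} G(u,v)\,dv\,du .
\]
By (S4), $G$ vanishes outside $[0,H]^2$. By (S5), $a_x+s_x(t)\le a_x+s_x^{\max}\le 0$ and $b_x+s_x(t)\ge b_x-s_x^{\max}\ge H$, and likewise in $y$. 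So the shifted spawn rectangle contains $[0,H]^2$ for every $t$, and the integral equals $\frac1A\int_{[0,H]^2}G$. This is a constant $c$ independent of $t$ and of $\pi$.

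The care needed here is to confirm that the off-screen zeroing convention is consistent with treating $\psi_w=G$ as an honest function with compact support. That is exactly what (S4) states. I will also assume $G$ is integrable, for example bounded, since activations on a bounded screen are finite.

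\textbf{Step 3 (conclusion).} Substituting $m(t)=c$ gives
\[
\mathbb{E}[C_w\mid\pi]=\frac{c}{T}\sum_t\bigl(\tau(t)-\bar\tau\bigr)=0.
\]
Taking the expectation over $\pi$ by the tower property gives $\mathbb{E}[C_w]=0$ unconditionally.

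For the limit statement, the videos are i.i.d. draws of $(x_0,y_0,\pi)$, so the $C_w(V)$ are i.i.d. They have finite mean because $G$ is bounded on its compact support and $\tau$ is assumed integrable; I will state this as a mild regularity assumption. The strong law of large numbers then gives $\frac1N\sum_{V\in S} C_w(V)\to 0$ almost surely. If the profile is the same for all videos, the same argument applies conditionally on that profile.
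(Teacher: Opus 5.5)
Your proof is correct and follows the paper's argument essentially step for step. You drop the $\bar\psi_w$ term using $\sum_t(\tau(V,t)-\bar\tau(V))=0$, pull the now-constant weights out of the expectation, use the shifted-rectangle change of variables with (S4)--(S5) to get the $t$-independent mean $\mu_G$, and conclude via the zero-sum weights and the tower property; the only addition is that you state explicitly the regularity assumptions (bounded $G$, integrable $\tau$, i.i.d.\ videos) that the paper's appeal to the law of large numbers leaves implicit.
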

This means for any one fixed motion profile, as the number of videos $N$ grows, the sample mean covariance goes to zero.
To get the no-false-positives property for stimulus sets containing differing motion profiles (S2), we use the law of total expectation which states that the unconditional expectation (the average across all videos, regardless of profile) is the average of conditional expectations.
\begin{equation}
    \mathbb{E}[C_w(V)] = \mathbb{E}_{\pi} \left[ \mathbb{E}_{x_0, y_0} [C_w(V) \mid \pi] \right],\qquad
    \mathbb{E}[C_w(V)] = \mathbb{E}_{\pi} [0] = 0    
\end{equation}
\textbf{Contrapositive} (the usable form): \textit{If the sample mean $\bar{C}_w = \frac{1}{N}\sum_{V\in S}C_w(V)$ is significantly non-zero (e.g. by a one-sample $t$-test), then $w$ is not static.}

\begin{proof}
Step 1 (Simplification). Expand and simplify the covariance using $\sum_t(\tau(V,t)-\bar{\tau}(V))=0$:
$$C_w(V) = \frac{1}{T}\sum_{t=1}^{T} \psi_w(V, t)(\tau(V, t) - \bar{\tau}(V))$$
Step 2 (Expectation over starting position). Fix the velocity profile (so $\tau(V, t)$ and $\bar{\tau}(V)$ are constants with respect to $x_0, y_0$). By linearity of expectation:
$$\mathbb{E}_{x_0, y_0}[C_w(V)] = \frac{1}{T}\sum_{t=1}^{T} \mathbb{E}_{x_0, y_0}[\psi_w(V, t)] \cdot (\tau(V, t) - \bar{\tau}(V))$$
Step 3 ($t$-independence of expected activation). Since $w$ is static, $\psi_w(V, t) = G(x_t, y_t)$. The position at time $t$ is: $x_t = x_0 + s_x(t), y_t = y_0 + s_y(t)$ where $s_x(t), s_y(t)$ are cumulative displacements (fixed, since we conditioned on the velocity profile). Therefore:
$$\mathbb{E}_{x_0, y_0}[G(x_t, y_t)] = \frac{1}{(b_x - a_x)(b_y - a_y)} \int_{a_x + s_x(t)}^{b_x + s_x(t)} \int_{a_y + s_y(t)}^{b_y + s_y(t)} G(x, y)\, dy\, dx$$
By the coverage condition, the support of $G$ (the screen) lies entirely within the integration region $[a_x + s_x(t), b_x + s_x(t)] \times [a_y + s_y(t), b_y + s_y(t)]$ for every $t$. Since $G(x, y) = 0$ outside the screen, the integral reduces to:
$$\mathbb{E}_{x_0, y_0}[G(x_t, y_t)] = \frac{1}{(b_x - a_x)(b_y - a_y)} \iint_{[0,H]^2} G(x, y)\, dy\, dx = \mu_G$$
We write $\mu_G$ as the mean, \textbf{independent of $t$}, dependent only on $G$ and the spawning range.

Step 4 (Conclusion). Substituting into Step 2:
\[\mathbb{E}_{x_0, y_0}[C_w(V)] = \frac{\mu_G}{T}\sum_{t=1}^{T}(\tau(V, t) - \bar{\tau}(V)) = \mu_G \cdot 0 = 0 \qedhere \] 
\end{proof}

\begin{remark}
    We also see that if $\psi_w(V, t)=\tau(V,t)$ then $\mathbb{E}[C_w]=\mathbb{E}[\text{Var}_t(\tau)]>0$ whenever $\tau$ varies across time. This shows the NFP test does have detection power and can flag temporal $w$.    
\end{remark}

Because the theorem applies to any direction $w$ and any deterministic $F$, the NFP test applies to standard basis axes, PCA, ICA directions, SAE dictionary elements and other choices of function $F$. This also formally justifies the SAE-based interventions of Section~\ref{sec:steer}.

\subsection{Theoretical limitations of the NFP test}
\label{sec:theoreticallimitations}
\textbf{L1. The test detects non-static features, not specific temporal concepts.} For directions $w$ that pass the test, we cannot conclude they encode $\tau$ specifically, because other temporal concepts may covary with $\tau$. A natural follow-up question is whether we are able to recover the true encoding directions of concept $\tau$ if they exist. See Appendix~\ref{app:decorrelatestimulus}, where we discuss this further and provide a mathematical framework. Our analysis suggests that trying to identify a feature as encoding a specific $\tau$ may be futile in the NFP framework. 

\textbf{L2. False negatives are possible}. A temporal feature may have zero covariance with $\tau$ if it encodes a temporal property that is statistically unrelated to $\tau$, or if their relationship is non-linear. The test only detects temporal features that covary with the chosen $\tau$ on the stimulus set. False negatives may also be possible because of insufficient signal in the stimulus set for the particular model, such that covariance does not pass the required threshold.

\textbf{L3. The kind of temporal features detectable depends on the construction of stimulus set S}. We have a motion profile for every video, allowing us to detect ball-motion-related temporal features. We cannot detect other temporal concepts like increasing brightness using this stimulus set. We also consider that multiple confounders, e.g. multiple balls, may be present in a stimulus. We discuss the possibility of handling multiple confounders in Appendix~\ref{app:generalNFPframework}.

\section{Empirical evaluation: General Setup}
\label{sec:setup}

\textbf{Models.} We apply the NFP test to three video transformers that differ in architecture, training, and performance: VideoMAE \citep{tong2022videomae}, a ViT-B masked autoencoder with joint spatiotemporal attention, where we test both the pretrained checkpoint and a variant \emph{finetuned on SSv2}; V-JEPA2 \citep{assran2025vjepa2}, a ViT-L encoder with joint spatiotemporal attention; and TimeSformer \citep{bertasius2021space}, a ViT-B model with divided space-time attention \emph{finetuned on SSv2}. DINOv2 \citep{oquab2024dinov2}, a ViT-B image encoder with no cross-frame attention, serves as a negative control, unable by construction to encode temporal concepts. In all cases, we extract post-MLP residual-stream activations from the final transformer layer. Full model specifications and checkpoint identifiers are in Appendix~\ref{app:model_specs}.

\textbf{NFP test dataset.} We render $N=3000$ videos using Kubric/Blender \citep{greff2022kubric}, each of 16 frames, at either 224 or 256 resolution depending on the model being tested. A white ball of radius $0.9$ m is placed on a uniform grey background under an orthographic camera angle over a $14\times14$ m canvas. Starting positions $(x_0, y_0)$ are drawn uniformly satisfying the coverage condition. We then pick five temporal concepts $\tau_k\in\{\text{speed},\, v_x,\, v_y,\, |a|,\, \text{direction}\}$ and assign each video ball motion profiles using the method outlined in Appendix~\ref{app:decorrelation_practice}. This produces a stimulus set that has all five temporal concepts $\tau_k$ decorrelated pairwise, with covariance $\leq0.0022$ per pair. We discuss why this is important in Appendix~\ref{app:entangled}. For each time step $t$ we record $b(V,t)$ which describes the spatial patch that contains the ball center, and we compute the ground truth values of each of five temporal concept profiles $\tau_k(V,t)$. Computation of $\psi_w$ and $\tau_k(V,t)$ for $\ell_p>1$ is discussed in Appendix~\ref{app:ellp>1}. We provide some visual samples of example videos in Figure~\ref{fig:video_examples}.

\textbf{Temporal Feature Identification.}
We classify features as temporally sensitive based on their within-video covariance with each temporal concept profile. For each pair $(w, k)$ this yields one covariance value per video: $C_w^{(k)}(V) \;=\; \frac{1}{T}\sum_{t=1}^{T} \big(\psi_w(V,t) - \bar\psi_w(V)\big)\big(\tau_k(V,t) - \bar\tau_k(V)\big)$. A one-sample $t$-test then asks whether the mean value for each $(w, k)$ over $N=3000$ videos is zero, which is the population mean if $w$ is static. Direction $w$ is \emph{flagged} as temporal for $\tau_k$ only if $p < \alpha/D$ (Bonferroni) with $\alpha = 0.05$. Features that give undefined $t$-statistics are classed as non-valid features that are neither static nor temporal. 

\textbf{Sparse autoencoder.}
To test the effect of interventions on temporal features and to show that the NFP framework can work in real applications, we train standard L1-penalized ReLU SAEs on the final-layer activations for each model. We follow the procedure in~\citet{pach2025sparse}, and choose L1-penalization instead of TopK methods because TopK SAEs give flickering activations on SAE features \citep{dokme2026interpreting}, where features can be switched on/off rapidly between frames. Given an activation $x$, we go through the encoder: $f = \mathrm{ReLU}(W_\mathrm{enc}(x - b_\mathrm{dec}) + b_\mathrm{enc})$, and the decoder reconstructs $\hat{x} = W_\mathrm{dec} f + b_\mathrm{dec}$. Here, the theoretical framework is applied as $F(\phi_{w_i})=\mathrm{ReLU}(\phi_{w_i}+c)$ for encoder row $w_i$. We use a $\times 8$ expansion factor, and for each video model the SAE is trained on the activations of that model on 4000 randomly chosen SSv2 training clips. For DINOv2, we train an SAE with the same protocol on its per-frame patch token activations. More details on SAE training can be found in Appendix~\ref{app:training}.

\section{Identifying Temporal Features}
\label{sec:nfp}

We apply the NFP test to the models on the dimension axes as directions $w$. Table~\ref{tab:axes_sae_main} gives the flagged counts and the per-profile breakdown. The test flags between 28.7\% and 92.1\% of axes as temporal, varying across models. For all video models except VideoMAE finetuned, the flagged axes counts are concentrated on $|a|$: many more axes are flagged for $|a|$ than for other profiles. This does not mean that these dimension axes are flagged only for $|a|$ or that they encode the concept of $|a|$, only that they encode some (set of) motion concept(s), and their activation covaries significantly with $|a|$. They could in fact also encode static concepts on top of motion concepts via polysemanticity. Likewise, 0 dimension axes flagged as covarying significantly with direction doesn't mean that V-JEPA2 doesn't have some $w$ encoding some concept of direction, rather that it may not match our definition $\tau_{dir}(V, t) = \mathrm{atan2}(v_y(V,t),\, v_x(V,t))$, or that the covariance was deemed insignificant i.e. false negative. In general, these count values are to be taken with an understanding of the theoretical limitations established in Section~\ref{sec:theoreticallimitations}.
\begin{table}[t]
\centering
\small
\begin{tabular}{lllrrrrrrr}
\toprule
Model & $D$ & Features & speed & $v_x$ & $v_y$ & $|a|$ & direction & Any & \% \\
\midrule
\multirow{2}{*}{VideoMAE finetuned} 
    & 768 & axes & 352 & 359 & 151 & 496 & 89 & 707 & 92.1 \\
    & 6144 & SAE & 66 & 39 & 33 & 19 & 4 & 109 & 1.77 \\
\midrule
\multirow{2}{*}{VideoMAE} 
    & 768 & axes & 6 & 23 & 58 & 515 & 29 & 540 & 70.3 \\
    & 6144 & SAE & 0 & 0 & 2 & 170 & 0 & 172 & 2.80 \\
\midrule
\multirow{2}{*}{V-JEPA2} 
    & 1024 & axes & 45 & 2 & 4 & 254 & 0 & 294 & 28.7 \\
    & 8192 & SAE & 4 & 1 & 2 & 57 & 0 & 62 & 0.76 \\
\midrule
\multirow{2}{*}{TimeSformer} 
    & 768 & axes & 23 & 20 & 63 & 408 & 4 & 426 & 55.5 \\
    & 6144 & SAE & 0 & 0 & 0 & 13 & 0 & 13 & 0.21 \\
\midrule
\multirow{2}{*}{DINOv2} 
    & 768 & axes & 0 & 0 & 0 & 0 & 0 & 0 & 0.0 \\
    & 6144 & SAE & 0 & 0 & 0 & 0 & 0 & 0 & 0.0 \\
\bottomrule
\end{tabular}
\caption{Final layer representation space dimension axes flagged by the NFP test at the Bonferroni threshold $p < 0.05/D$. Axes: Dimension axes of the final layer representation. SAE: sparse autoencoder features. $D$ is the number of directions $w$ tested (dimensions or SAE dictionary size). Per-profile columns count features significant for that $\tau_k$, and a feature can be significant for several. ``Any'' counts directions $w$ significant for at least one profile.}
\label{tab:axes_sae_main}
\end{table}

Nonetheless, the much higher counts of axes flagged for $|a|$ seems to suggest something non-trivial in the inner workings of these video models. We do not know why this is the case, and invite further study on the matter. Comparing VideoMAE and VideoMAE finetuned also seems to suggest that finetuning changes the temporal concepts encoded by the model. Since the finetuning was done on SSv2, perhaps learning concepts more directly related to speed, velocity and direction helps with SSv2 classification performance. We can also compare between different layers of the same model (results across different layers in Appendix~\ref{app:main_layers}), which seems to suggest that V-JEPA2 learns temporal information present in videos differently compared to VideoMAE and TimeSformer. For DINOv2, none of the axes are flagged as temporal for any $\tau_k$, which is expected. This is empirical validation that the NFP test doesn't produce false positives on models without temporal-enabling architecture.

Applying the NFP test to an encoder $\to$ SAE setup, we see in Table~\ref{tab:axes_sae_main} that for some models the distribution of the count of flagged SAE features is similar to that of dimension axes. However, for VideoMAE finetuned, there are now many more SAE features flagged as temporal using speed, $v_x$, $v_y$ than for $|a|$, further showing that flagged feature counts should generally be taken with caution. The \% of dictionary elements flagged is also much lower for SAEs than for dimension axes, but the two are not directly comparable because of possible SAE artifacts, such as most SAE features not firing on our stimulus (Appendix~\ref{app:live}). Whether the temporal subspace has meaningful structure, e.g. occupying a dedicated low-dimensional subspace of the representation space, remains an open question that Table~\ref{tab:axes_sae_main} cannot answer. Appendix~\ref{app:decorrelatestimulus} also shows it may be difficult to study such structure using our framework of activations and controlled stimulus, but we believe other methods outside our framework might enable more progress. 

\section{Synthetic validation of the NFP test}
\label{sec:synth_main}

The NFP test on real models tells us which directions are flagged, but we cannot verify against a ground truth because we don't know which directions in video models truly encode temporal concepts. To further validate that the test flags directions carrying temporal content and doesn't flag directions without it, we construct a synthetic representation $h(V,t) \in \mathbb{R}^{768}$ to control what information sits in every direction. We run two validation experiments. The first applies the NFP test directly to directions spanning a controlled range of temporal content. The second trains an SAE on $h$ and checks whether the NFP test's intended behavior is preserved operating on SAE features.
\begin{equation}
h(V,t) = \underbrace{\tau_{\mathrm{norm}}(V,t)\, W_\tau}_{\text{temporal}} + \underbrace{c(V)\, W_\sigma}_{\text{static}} + \underbrace{g(V, \mathrm{pos}(V,t))\, W_p}_{\text{position-dependent}} + \underbrace{\varepsilon(V,t)}_{\text{noise}}
\label{eq:synth}
\end{equation}
$W_\tau \in \mathbb{R}^{5 \times 768}$, $W_\sigma \in \mathbb{R}^{100 \times 768}$, and $W_p \in \mathbb{R}^{50 \times 768}$ have mutually orthonormal rows, obtained from the QR decomposition of a random Gaussian matrix. $\varepsilon(V,t) \sim \mathcal{N}(0, 0.05^2 I)$ is i.i.d. noise.

\textbf{Test on directions.} A test direction $d \in \mathbb{R}^{768}$ is a unit vector constructed as $d = \beta\, u_\tau + \sqrt{1 - \beta^2}\, u_\perp$ where $u_\tau$ is a random unit vector in $\operatorname{span}(W_\tau)$ and $u_\perp$ is a random unit vector in its orthogonal complement. The temporal fraction $\operatorname{ProjFrac}(d) = \|P_{W_\tau} d\|^2/\|d\|^2 = \beta^2$ is exact by construction. 

We ask: as the amount of temporal content in a direction increases from zero to one, does the NFP test's response increase accordingly? And does it correctly return zero for directions with no temporal content, including directions that contain position-dependent content?
At each of 26 log-spaced $\beta^2$ levels from $10^{-5}$ to $1$, we draw 20 random directions $d$ (random $u_\tau$ and $u_\perp$) and 5 single-concept directions where $u_\tau$ is set to one row of $W_\tau$ rather than a random vector in the span. We also test $\beta^2=0$. We compute $\psi(V,t) = \langle h(V,t), d \rangle$ and the NFP test gives a threshold of $|t| > 4.0$. Table~\ref{tab:projfrac} shows a representative subset and the full results are in Appendix~\ref{app:projfrac_full}.

\begin{table}[t]
\centering
\small
\begin{tabular}{lccrrrrr}
\toprule
& & & \multicolumn{5}{c}{$\max_k |t|$, single-concept directions} \\
\cmidrule(lr){4-8}
$\beta^2$ & Flagged & Median $\max_k |t|$ & speed & $v_x$ & $v_y$ & $|a|$ & direction \\
\midrule
0 & 0/200 & 1.5 & -- & -- & -- & -- & -- \\
$10^{-5}$ & 2/20 & 3.3 & 7.2 & 6.3 & 4.5 & 2.6 & 3.5 \\
$1.6\times10^{-5}$ & 6/20 & 3.7 & 7.6 & 7.8 & 1.5 & 1.2 & 5.2 \\
$2.5\times10^{-5}$ & 17/20 & 5.3 & 6.7 & 9.7 & 7.0 & 3.8 & 3.9 \\
$6.3\times10^{-5}$ & 20/20 & 7.6 & 12.8 & 11.8 & 9.9 & 5.3 & 8.6 \\
$10^{-3}$ & 20/20 & 20.6 & 31.7 & 22.8 & 26.1 & 11.3 & 18.5 \\
$1$ & 20/20 & 31.3 & 39.5 & 24.0 & 44.1 & 12.7 & 22.0 \\
\bottomrule
\end{tabular}
\caption{NFP test response across ground-truth temporal fractions $\beta^2$. ``Flagged'' counts the random directions passing the Bonferroni threshold ($|t| > 4.0$) at each level. The last five columns give the response of directions with all temporal content in a single concept.}
\label{tab:projfrac}
\end{table}

No direction at $\beta^2 = 0$ is flagged. Flagging begins at $\beta^2 = 10^{-5}$ and reaches 20/20 by $\beta^2 = 6.3 \times 10^{-5}$, meaning for our synthetic representation the test detects directions whose temporal content occupies less than $0.01\%$ of their squared norm, although real representations may behave differently. The five single-concept directions also saturate at different levels: speed and $v_y$ reach $|t| \approx 40$, while acceleration saturates at $|t| \approx 13$, with acceleration consistently having lowest $t$-scores at each $\beta^2$. In Appendix~\ref{app:saturation} we show that for this synthetic representation the saturation value, which we consider a reasonable proxy for the relative sensitivity of the NFP test to $\tau_k$, is determined by the ratio of within-video to across-video variance of $\tau_k$ in the stimulus set. Intuitively this insight should carry over to real representations. This suggests that the acceleration-dominated pattern we observe in Table~\ref{tab:axes_sae_main} may not be a test artifact, that the prevalence of acceleration-sensitive directions reflects genuine model behavior.

\textbf{Test on validity through SAEs.} We train an SAE on the synthetic representations $h(V,t)$ the same way as the SAEs were trained for real models in Section~\ref{sec:setup}. We then run the NFP test on the learned SAE features $w_i$ and measure $\mathrm{ProjFrac}(w_i) = \|P_{W_{subspace}} w_i\|^2 / \|w_i\|^2$ for all three subspaces.
The NFP test flags 8 of 6144 features ($0.13\%$) as temporal. Table~\ref{tab:synth_alignment} shows that flagged features place $0.582$ of their norm in the temporal subspace, $198\times$ the unflagged value of $0.003$. On the static and position-dependent subspaces, flagged features fall below the random baseline, and the opposite is true for unflagged features. No feature orthogonal to the planted temporal subspace is flagged, and all 8 flagged features align strongly with some planted concept direction (maximum absolute cosine $0.57$–$0.76$). Thus, the NFP test applied through the SAE continues to behave as intended. The individual feature breakdown is in Appendix~\ref{app:synth_features}. We also repeat this control with non-orthonormal temporal directions in Appendix~\ref{app:synth_oblique}, and find that all flagged features remain indeed temporal.
\begin{table}[t]
\centering
\small
\begin{tabular}{lccc}
\toprule
Subspace (dim) & Flagged (8) & Unflagged (6136) & Random baseline \\
\midrule
temporal $W_\tau$ (5) & \textbf{0.582} & 0.003 & 0.007 \\
static $W_\sigma$ (100) & 0.105 & 0.236 & 0.130 \\
position $W_p$ (50) & 0.051 & 0.106 & 0.065 \\
\bottomrule
\end{tabular}
\caption{Mean fraction of SAE feature norm in each planted subspace, for flagged and unflagged features. The random baseline is $d_{\text{sub}}/768$.}
\label{tab:synth_alignment}
\end{table}

\section{Intervention analysis}
\label{sec:steer}

We intervene on SAE features identified as temporal and evaluate intervention effects on SSv2 classification performance. We only present span erasure here, with more results in Appendix~\ref{app:interventionanalysis}. Span erasure seeks to show that erasing the flagged span destroys a model's ability to perform on temporally demanding tasks. We build the orthogonal projector $P = QQ^\top$ onto the span of flagged feature decoder columns, where $Q$ is an orthonormal basis of those columns, and replace every token's layer activation $x$ by $x - Px$ on SSv2 validation videos. This acts directly on the residual stream, and the SAE is used only to supply the erased directions. We do not encode, decode, or re-add a reconstruction error. We test on 10 videos for each SSv2 class, for a total of $n=1740$.

\begin{table}[h]
\centering
\small
\begin{tabular}{@{}lcccccc@{}}
\toprule
Model & Feature Count & Baseline & Identified & Intermediate & Low Static & Random \\
\midrule
VideoMAE finetuned & 109 & 0.629 & 0.572 & 0.595 & 0.602 & 0.602\\
V-JEPA2            & 62 &  0.681 & 0.578 & 0.558 & 0.540 & 0.562\\
TimeSformer        & 16 & 0.495 & 0.460 & 0.437 & 0.455 & 0.490\\
\bottomrule
\end{tabular}
\caption{Columns show all-class accuracy. Baseline: no features erased. Identified: NFP-identified temporal features erased. Low Static: Drawn from features whose covariance with every temporal concept gives $|t| < 2$ (compared to the temporal threshold of $|t|>4+$), span erased. Intermediate: Valid features (see Appendix~\ref{app:live}) that are not identified or low static. Random: Randomly selected from all non-flagged features including non-valid features, performance averaged across 3 seeds. Random can also be seen as an imperfect proxy for features active on SSv2 but not on the stimulus set. TimeSformer SAE features are drawn from layer 10, not layer 11, because the layer 11 representation is not used for SSv2 classification except for the CLS token.}
\label{tab:erasure_main}
\end{table}

Most notably in Table~\ref{tab:erasure_main}, span erasure behavior differs across models. VideoMAE finetuned is most affected by erasure of temporal features, with erasure of random features having a similar effect as erasure of low static features. V-JEPA2 is most affected by erasure of low static features. TimeSformer is equally affected by erasure of temporal and low static features, but erasure of random features has little effect. There are multiple possible reasons for these differences. They could reflect the nature of features used by each model for SSv2 classification; V-JEPA2 and TimeSformer may indeed rely on features that are orthogonal to motion encoding. Or it could be the result of the NFP test failing to flag temporal features, i.e. false negatives, either because there isn't enough signal in the stimulus set (L2) or because the model uses non-ball-motion based temporal features for SSv2 classification (L3). We also acknowledge that the SAE itself may be a point of failure for intervention results.

To attempt to make some sense of these results, we conduct further experiments detailed in Appendix~\ref{app:interventionanalysis}, which together seem to suggest that we are unable to characterize temporal features in video models as more important for temporally demanding tasks compared to non-temporal features. This is unexpected, but perhaps less so for the reasons previously mentioned, and the fact that static features can also be composed across frames in classification MLPs or heads to represent temporal concepts. DINOv2 models have been able to perform respectably on SSv2 through precisely such a mechanism \citep{carreira2024scaling4d}. Overall, interventions on temporal features may not produce intended results better than a non-temporal feature baseline.

\section{Conclusion}

We propose the No-False-Positives test as a solution to the coupling problem in video encoders: a statistical test with a formal guarantee that any feature flagged as temporal is provably non-static. The test relies on two ideas: using synthetic stimuli to control what varies across frames, and constructing a stimulus distribution under which static features have constant expected activation across time, so that nonzero covariance with a temporal concept implies non-static content.

We apply the test to VideoMAE, V-JEPA2, and TimeSformer and find temporal features in all three models, and none in the DINOv2 image encoder as expected. Validation on a synthetic representation shows that the test's detection rate follows the true temporal content of a direction and that the test's intended behavior is preserved through sparse autoencoders. Intervention results are mixed across video models, suggesting that the relationship between detectable temporal features in representation space and task-relevant temporal reasoning varies across models.

Several improvements could be made. Using different stimulus sets would broaden the kinds of temporal concepts detectable. Importantly, more work should also be done in light of intervention results to understand whether the proposed mechanism of static feature $\to$ composed temporal understanding in classification heads is indeed how many video models work, as well as to investigate why video model pretraining seemingly fails to alone produce impactful temporal features.

\newpage

\section*{Reproducibility}
Details of the NFP test dataset construction and sparse autoencoder usage can be found in Section~\ref{sec:setup}. We provide more information on model specification in Appendix~\ref{app:model_specs} and SAE training in Appendix~\ref{app:training}. We also provide supplementary code sufficient to reproduce the main experiments, which can be found here at \url{https://github.com/AndrewRqy/No-False-Positive-Paper-ver/}.

\bibliography{references}
\bibliographystyle{plainnat}

\newpage

\appendix
\section*{LLM Usage}
Claude Opus 4.6 was used to assist with the initial ideation of the NFP test, retrieval of related work, and proving and verification of mathematical claims in Appendices~\ref{app:generalNFPframework}, \ref{app:ellp>1}, \ref{app:decorrelatestimulus} and \ref{app:saturation}. We also used Claude Opus 4.8 and Claude Fable 5 to assist with adapting the SAE training toolkit to V-JEPA2 and TimeSformer, standardizing causal analysis interventions across different models, and constructing tables for the paper, as well as putting together the codebase found at \url{https://github.com/AndrewRqy/No-False-Positive-Paper-ver/}. These were carefully checked by the authors. LLMs were also used for advice on writing style and language polishing, as well as initial draft sections. All final writing and the main mathematical results in this paper are the result of authors' writing and editing, and are not generated directly by LLM.

\section{A General NFP Framework}
\label{app:generalNFPframework}

The NFP test developed in Section~\ref{section:theoreticalframework} addresses a specific instance of a more general problem: determining whether a feature depends on a target concept or merely on a confounding variable that happens to co-vary with it. In our paper, the target concept is a temporal property (e.g.\ speed), and the confounding variable is ball position, which is physically coupled with motion.

In this appendix we state and prove a general NFP theorem, show that the result of Section~\ref{section:theoreticalframework} is a specific instance, and outline how the framework applies to problems outside of video model interpretability.

\subsection{Setup}
\label{app:general_setup}

\begin{definition}[Observational structure]
An NFP problem consists of:
\begin{itemize}[leftmargin=*, itemsep=2pt, topsep=2pt]
    \item $N$ \textbf{units} indexed by $n = 1, \ldots, N$. Each unit can be thought of as a single experiment or trial.
    \item $T$ indexed observations per unit, $t = 1, \ldots, T$. 
    \item A \textbf{representation} $h(n, t) \in \mathbb{R}^d$. This can be thought of as some system's output at $n$, $t$.
    \item A \textbf{target concept profile} $\tau(n, t) \in \mathbb{R}$: the value of the concept of interest at $n,t$.
    \item A \textbf{confounding variable} $C(n, t) \in \mathscr{C}$: a quantity that co-varies with $\tau$ within each unit, where $\mathscr{C}$ is an arbitrary space.
\end{itemize}
\end{definition}

The core difficulty is that $C$ and $\tau$ change together across observations within each unit, so a feature that appears to track $\tau$ may instead be responding to $C$.

\begin{definition}[Confound-dependent feature]
\label{def:confound_dependent}
A scalar feature $f$ is \emph{confound-dependent} if there exists a function $g\colon \mathscr{C} \to \mathbb{R}$ such that for all $n,t$,
\begin{equation}
    f(n, t) = g(C(n, t))
\end{equation}
\end{definition}

\begin{definition}[Within-unit covariance]
\label{def:within_unit_cov}
For a scalar feature $f$, the within-unit covariance with $\tau$ at unit $n$ is
\begin{equation}
    \mathcal{C}_f(n) 
    = \frac{1}{T}\sum_{t=1}^{T} \bigl(f(n,t) - \bar{f}(n)\bigr)\bigl(\tau(n,t) - \bar{\tau}(n)\bigr)
\end{equation}
where $\bar{f}(n) = \frac{1}{T}\sum_t f(n,t)$ and $\bar{\tau}(n) = \frac{1}{T}\sum_t \tau(n,t)$.
\end{definition}

The NFP test asks whether the sample mean $\bar{\mathcal{C}}_f = \frac{1}{N}\sum_n \mathcal{C}_f(n)$ is significantly different from zero.

\subsection{Assumptions}

The following three assumptions define what it means for the experimental design to ``decouple'' the confounder from the target concept.

\begin{description}[leftmargin=0pt, itemsep=6pt, style=unboxed]

\item[Assumption 1 (Randomisable confounder).]
There exists a random variable $R(n)$, taking values in some measurable space $\mathcal{R}$, and for each $n$ a deterministic function $\phi_n\colon \mathcal{R} \times \{1,\ldots,T\} \to \mathscr{C}$ (not to be confused with $\phi_w$) such that,
\begin{equation}
    C(n, t) = \phi_n(R(n), t)
\end{equation}
The confounding variable at every condition is determined by a per-unit random seed $R$ and the observation index $t$.

\item[Assumption 2 (Independence).]
The random seed $R(n)$ is independent of the target concept profile $\tau(n, t)$.

\item[Assumption 3 (Equidistribution / Coverage).]
For each $n$, the distribution of $C(n,t) = \phi_n(R(n), t)$ over the random variable $R$ is the same for every $t$. 
\end{description}

The intuition behind Assumption~3 is the same as in the main paper: if the confounder has the same distribution at every $t$, then any feature that depends only on the confounder will have the same expected value at every condition, and a value that is constant in expectation across conditions cannot covary with anything. In practice, we need $\mathbb{E}[|g(\phi(R,t))|]$ to be well-defined. This is automatically satisfied whenever $g$ is bounded, which holds for neural network features.

Different applications achieve Assumption~3 through different mechanisms. In the main paper, it is achieved through uniform starting position, off-screen zeroing, and the coverage condition (S1, S4, S5). Other mechanisms, such as periodic (toroidal) boundary conditions, can likewise achieve Assumption 3.

\subsection{Theorem and proof}

\begin{theorem}[General NFP]
\label{thm:general_nfp}
Under Assumptions 1--3, for any confound-dependent feature $f(n,t)=g(C(n,t))$,
\begin{equation}
    \mathbb{E}\bigl[\mathcal{C}_f(n)\bigr] = 0
\end{equation}
\end{theorem}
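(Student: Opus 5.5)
The proof follows the four steps of Theorem~\ref{thm:main}, with the uniform-integral computation replaced by Assumption~3. The idea is to condition on the target concept profile so that the centred weights $\tau(n,t)-\bar{\tau}(n)$ become constants. The feature part then has a $t$-independent mean, and the weights sum to zero.

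\textbf{Step 1 (simplification).} Since $\sum_t (\tau(n,t)-\bar{\tau}(n)) = 0$, the term involving $\bar f(n)$ drops out, giving
\[
\mathcal{C}_f(n) = \frac{1}{T}\sum_{t=1}^{T} g\bigl(\phi_n(R(n),t)\bigr)\bigl(\tau(n,t)-\bar{\tau}(n)\bigr).
\]
By Assumption~1 the feature factor depends only on $R(n)$ and $t$.

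\textbf{Step 2 (conditioning).} Condition on the whole profile $\tau(n,\cdot)$, the analogue of fixing $\pi$ in Theorem~\ref{thm:main}. Under this conditioning the centred weights are constants. By Assumption~2, the conditional law of $R(n)$ equals its marginal law. Linearity of conditional expectation then gives
\[
\mathbb{E}\bigl[\mathcal{C}_f(n)\mid \tau(n,\cdot)\bigr] = \frac{1}{T}\sum_t \mathbb{E}_R\bigl[g(\phi_n(R,t))\bigr]\bigl(\tau(n,t)-\bar\tau(n)\bigr).
\]

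\textbf{Step 3 ($t$-independence).} By Assumption~3, $C(n,t)$ has the same law for every $t$. Hence $\mu_g(n) := \mathbb{E}_R[g(\phi_n(R,t))]$ does not depend on $t$. This expectation is finite because $g$ is assumed bounded (or integrable, as the remark after the assumptions requires). Pulling $\mu_g(n)$ out, the conditional expectation becomes $\mu_g(n)$ times a sum of centred weights, which is $0$.

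\textbf{Step 4 (conclusion).} The law of total expectation gives $\mathbb{E}[\mathcal{C}_f(n)] = \mathbb{E}[0] = 0$. This is the same averaging over profiles used after Theorem~\ref{thm:main}.

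\textbf{Main obstacle: the setup of Assumptions~1--3.} The algebra is routine; the step needing care is how the assumptions fit together.

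First, $\phi_n$ may depend on $n$. If $\phi_n$ itself varies with $\tau(n,\cdot)$, as in the main paper where the displacement depends on the velocity profile, conditioning must be on whatever indexes $\phi_n$. Assumption~3 must also be read as holding for each fixed value of that index. Otherwise $\mu_g$ could vary with $t$ through a correlation between the shape of $\phi_n$ and $\tau$.

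Second, Assumption~2 must be read as independence of $R(n)$ from the full profile $\tau(n,\cdot)$, not just from each $\tau(n,t)$ marginally, so that Step~2 is valid.

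I would state both readings explicitly at the start of the proof. I would then remark that the main-paper instance satisfies them: there $R = (x_0,y_0)$, $\phi_n$ is fixed by the velocity profile $\pi$, and the conditioning is on $\pi$.
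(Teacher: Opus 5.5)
Your proposal is correct and follows the paper's proof essentially step for step: you drop the $\bar f(n)$ term using $\sum_t(\tau(n,t)-\bar\tau(n))=0$, use Assumption~2 to treat the centred weights as constants under the expectation over $R$, use Assumption~3 to pull out a $t$-independent mean $\mu_g^{(n)}$, and finish with the law of total expectation over profiles. Your remarks on reading Assumption~3 for each fixed index of $\phi_n$, and Assumption~2 for the full profile $\tau(n,\cdot)$, agree with the paper, which handles this implicitly by working per unit $n$ and then averaging ``over $\tau$, i.e.\ over $n$''.
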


\noindent\textbf{Contrapositive.} \textit{If $\bar{\mathcal{C}}_f$ is significantly nonzero, then $f$ is not confound-dependent: $f$ must depend on something beyond the confounding variable.}

\begin{proof}
We follow the proof structure in Section~\ref{section:theoreticalframework}.
We first note that the within-unit covariance can be simplified:
    $$\mathcal{C}_f(n)
    = \frac{1}{T}\sum_{t=1}^T f(n,t)\bigl(\tau(n,t)-\bar\tau(n)\bigr)
       - \bar f(n)\cdot\underbrace{\frac{1}{T}\sum_{t=1}^T\bigl(\tau(n,t)-\bar\tau(n)\bigr)}_{=\;0}$$
The second term vanishes because deviations from the mean sum to zero. We get,
$$\mathcal{C}_f(n) = \frac{1}{T}\sum_{t=1}^T f(n,t)\bigl(\tau(n,t)-\bar\tau(n)\bigr)$$
Now substitute the confound-dependent structure $f(n,t) = g(\phi_n(R(n), t))$ and take the expectation over $R$. By Assumption~2, $\tau(n,\cdot)$ is independent of $R(n)$, so the factors $\tau(n,t) - \bar\tau(n)$ are constant with respect to the expectation over $R$:
$$\mathbb{E}_R\bigl[\mathcal{C}_f(n)\bigr] = \frac{1}{T}\sum_{t=1}^T \mathbb{E}_R\bigl[g(\phi_n(R,t))\bigr]\cdot\bigl(\tau(n,t)-\bar\tau(n)\bigr)$$
By Assumption~3, $\mathbb{E}_R[g(\phi_n(R,t))] = \mu_g^{(n)}$ for all $t$, so
$$\mathbb{E}_R\bigl[\mathcal{C}_f(n)\bigr] = \frac{\mu_g^{(n)}}{T}\sum_{t=1}^T \bigl(\tau(n,t)-\bar\tau(n)\bigr) = \mu_g^{(n)} \cdot 0 = 0$$
This holds for every fixed target concept profile $\tau(n,\cdot)$. If the profiles are themselves randomly sampled (as when different units have different velocity profiles), the law of total expectation over $\tau$, i.e. over $n$, gives:
$$\mathbb{E}\bigl[\mathcal{C}_f(n)\bigr] = \mathbb{E}_\tau\!\Bigl[\mathbb{E}_R\bigl[\mathcal{C}_f(n)\bigr]\Bigr] = \mathbb{E}_n[0] = 0$$
\end{proof}

\subsection{The paper's setup as an instance of the general framework}

The NFP test of Section~\ref{section:theoreticalframework} is recovered by the following instantiation:

\begin{center}
\small
\begin{tabular}{@{}ll@{}}
\toprule
\textbf{General framework} & \textbf{This paper} \\
\midrule
Unit $n$ & Video $V$ \\
Observation $t$ & Time step $t$ \\
Representation $h(n,t)$ & Layer-$l$ representation at ball token \\
Target concept $\tau(n,t)$ & Temporal concept (speed, $v_x$, direction, etc.) \\
Confounding variable $C(n,t)$ & Effective ball position: $(x_t,y_t)$ if on-screen, $\bot$ if off-screen \\
Random seed $R(n)$ & Starting position $(x_0, y_0) \sim \mathrm{Uniform}([a_x,b_x]\times[a_y,b_y])$ \\
$\phi(R,t)$ & $(x_0+s_x(t),\;y_0+s_y(t))$ if in $[0,H]^2$, else $\bot$ \\
Assumption 1 & Ball position determined by starting position + velocity profile \\
Assumption 2 & Starting position independent of velocity profile (S3) \\
Assumption 3 & Uniform starting position (S1) + coverage (S5) + off-screen zeroing (S4) \\
\bottomrule
\end{tabular}
\end{center}

For assumption 3, in our video model with ball videos setup, the raw ball position $(x_t, y_t) \in \mathbb{R}^2$ does \emph{not} have a $t$-independent distribution because the uniform rectangle shifts with the cumulative displacement $s(t)$. However, the \emph{effective} confounding variable, i.e. ball position when on-screen, an invalid default $\bot$ when off-screen, does have a $t$-independent distribution, because:

\begin{enumerate}[leftmargin=*, itemsep=2pt]
    \item The coverage condition (S5) guarantees that the visible screen $[0,H]^2$ is contained within the shifted rectangle $[a_x+s_x(t),\, b_x+s_x(t)] \times [a_y+s_y(t),\, b_y+s_y(t)]$ for every $t$ and every velocity profile.
    \item Therefore, the probability that the ball is on-screen is the same regardless of $t$, and conditional on being on-screen, the position is $\mathrm{Uniform}([0,H]^2)$ regardless of $t$.
    \item Off-screen zeroing (S4) sets all features to output zero when the ball is off-screen. For any confound-dependent feature $f(n,t) = g(C(n,t))$, this means $f=g(\bot) = 0$.
\end{enumerate}

\noindent Together, these ensure that $\mathbb{E}_R[g(\phi(R,t))]$ is the same for all $t$ for every confound dependent feature, satisfying Assumption~3 and enabling the general theorem.

\subsection{Multi-confounder extension}

It's worth asking if the NFP framework extends to settings with multiple confounding variables. In the main paper, ball position $(x_t, y_t)$ is the sole confounder, but we can imagine setups with multiple moving objects, where a feature could depend on the joint
positions of all objects.

So we ask: if a feature depends on several confounders jointly, under what conditions does the NFP property still hold?

\textbf{Independent seeds.} Suppose we have $K$ confounding variables $C_1(n,t), \ldots, C_K(n,t)$, each driven by its own random seed: $C_k(n,t) = \phi_{k,n}(R_k(n), t)$. A feature $f$ is \emph{jointly confound-dependent} if
$$f(n,t) = g\bigl(C_1(n,t), \ldots, C_K(n,t)\bigr)$$
for some function $g : \mathscr{C}_1 \times \cdots \times
\mathscr{C}_K \to \mathbb{R}$.

If the seeds $R_1, \ldots, R_K$ are mutually independent, each independent of $\tau$, and each satisfying Assumption~3 individually (i.e.\ for each $k$ and $n$, the distribution of $C_k(n,t)$ over $R_k$ is the same for all $t$), then the NFP property holds for any jointly confound-dependent feature.

The reason is that independent seeds with individually
$t$-independent distributions give a jointly $t$-independent distribution: the joint distribution of $(C_1(n,t), \ldots, C_K(n,t))$ is the product of the marginals (by independence of the
seeds), and each marginal is the same for every $t$, so the product is the same for every $t$. This reduces to the single-confounder case with $C = (C_1, \ldots, C_K)$ and $R = (R_1, \ldots, R_K)$, and Theorem~\ref{thm:general_nfp} applies directly.

\textbf{An example of an application is to multiple moving objects.} Consider a video with two balls, Ball A and Ball B, each with its own starting position and velocity profile. A researcher wants to test whether a feature encodes Ball A's speed ($\tau$), or whether it merely responds to the positions of both balls (confounders $C_1 = (x_t^A, y_t^A)$, $C_2 = (x_t^B, y_t^B)$). If the starting positions $(x_0^A, y_0^A)$ and $(x_0^B, y_0^B)$ are drawn independently of each other and independently of both velocity profiles, with each satisfying the coverage condition, then the joint distribution is $t$-independent, and the NFP property holds: any feature depending only on the joint position, including relational features like ``balls are close,'' has zero expected covariance with Ball A's speed.

\textbf{Independent seeds are necessary, not just convenient.} If the confounders share a seed, the NFP property can fail even when each confounder individually has a $t$-independent distribution. Consider a simple example: two confounders $C_1, C_2$
both driven by the same seed $R \sim \mathrm{Uniform}[0, L)$ on a periodic (toroidal) domain, with $C_1(n,t) = (R + s_1(t)) \bmod L$ and $C_2(n,t) = (R + s_2(t)) \bmod L$, where $s_1, s_2$ are different displacement functions. Each marginal is $\mathrm{Uniform}[0, L)$ for all $t$, so each individually satisfies Assumption~3. However, the difference $C_1 - C_2 = (s_1(t) - s_2(t)) \bmod L$ is deterministic and varies with $t$. A feature like $g(C_1, C_2) = \mathbf{1}(|C_1 - C_2| < \epsilon)$,
which fires when the two confounders are close, has $t$-dependent expected activation. Such a feature could be flagged as temporal by the NFP test despite being purely confound-dependent. In this case the joint distribution shifts with $t$, even though the marginals do not. 

Applying such a failure case to the previous example, we consider the jointly confound-dependent feature ``the two balls are close together,'' i.e.\ $g(C_1, C_2) = \mathbf{1}(\|C_1 - C_2\| < r)$. If $C_1, C_2$ share a seed, this feature's activation varies with time and may covary with Ball A's speed, even though it depends only on position.

\subsection{Examples in other domains and remark about stochastic confounders}

To illustrate the generality of the framework, we outline how it would apply to three problems outside video interpretability. In each case, the same design idea applies: randomise a seed to give the confounder the same distribution at every $t$, then test within-unit covariance.

\textbf{Neuroscience:}
A neuroscientist records from a neuron while an animal tracks a moving stimulus. The question is whether the neuron encodes stimulus velocity ($\tau$) or merely responds to stimulus position ($C$), which co-varies with velocity within each trial. Units are trials ($n$), observations are time steps $t$, and the representation is the neuron's firing rate. By randomising the starting stimulus position across trials and ensuring the position distribution covers the neuron's receptive field at every time step $t$ (the coverage condition), any position-dependent response has zero expected covariance with velocity. Significant covariance then proves the neuron encodes something beyond position.

\textbf{Manufacturing: sensor attribution.}
A process engineer wants to determine whether a quality sensor responds to production speed ($\tau$) or merely to temperature ($C$), where speed and temperature co-vary during each production run. If the starting temperature can be randomised independently of the speed profile, and the randomisation ensures the temperature distribution at each time $t$ is the same, the framework can apply. A sensor whose readings depend only on temperature will show zero expected covariance with speed; significant covariance proves the sensor responds to something beyond temperature.

\textbf{Finance: signal attribution.}
A quantitative analyst wants to know whether a trading signal responds to momentum ($\tau$) or merely to price level ($C$). Within any evaluation window, price level changes with momentum, creating the same coupling problem. Units are evaluation windows, observations are time steps. Price at step $t$ is $p_0 + s(t)$ where $s(t)$ is the cumulative return, so price level has exactly the form $\phi(R, t) = R + s(t)$. To get equidistribution over $t$, we can randomise the entry price $p_0$ uniformly over a range wide enough that the signal's ``active zone'' (the analogue of the visible screen) is always covered. A signal that depends only on price level will show zero expected covariance with momentum.

Overall, we also note without proof that the NFP framework may extend, with modifications, to settings where the confounder has stochastic dynamics within each unit. We suspect that not all kinds of stochastic dynamics can be accommodated into a modified NFP framework. A full treatment is beyond the scope of this paper.

\section{Implications of $\ell_p>1$}
\label{app:ellp>1}

\subsection{Theoretical implications}

When $\ell_p > 1$, each token (tubelet) covers $\ell_p$ consecutive frames. This means a tubelet's visual content depends on the ball's position at all $\ell_p$ frames it covers. This raises the question: what counts as a static feature when a token already spans multiple frames?

We should just choose a confounder based on intuitive logic, e.g. average ball position within each tubelet, or ball position on a consistent select frame within each tubelet. This does work, and we show the proof goes through.

\textbf{What the proof requires.} We revisit why the proof works in the $\ell_p = 1$ case. At frame $t$, the ball position is $x_t = x_0 + s_x(t)$, and a static feature satisfies $\psi_w(V,t) = G(x_t, y_t)$. The coverage argument shows that $\mathbb{E}_{x_0}[G(x_0 + s_x(t),\; y_0 + s_y(t))]$ is independent of $t$. This works because at every $t$, $x_0 + s_x(t)$ is a shift of the uniform random variable $x_0$ by some constant, and a uniform distribution shifted by a constant remains uniform over a shifted interval. The coverage condition ensures $G$'s support is always inside this interval, so the integral is always the same. The important thing to notice is that this argument operates on $x_0$. If $x_0$ were absent from the quantity, the coverage argument would have nothing to act on.

We consider the general case of a linear combination of multiple positions. For $\ell_p > 1$, let tubelet $k$ cover frames $t_{k,1}, \ldots, t_{k,\ell_p}$. The ball's position at each frame is $x_{t_{k,j}} = x_0 + s_x(t_{k,j})$ and analogously for $y$. Consider a feature that depends on a linear combination of these positions:
$$\sum_{j=1}^{\ell_p} \alpha_j\, x_{t_{k,j}} = \Bigl(\sum_{j=1}^{\ell_p} \alpha_j\Bigr) x_0 + \sum_{j=1}^{\ell_p} \alpha_j\, s_x(t_{k,j})$$
The coefficient on $x_0$ is $\alpha=\sum_j \alpha_j$, and the remaining term is a constant $c_k$ determined by the velocity profile and tubelet index. Two cases arise.

\textbf{If $\alpha \neq 0$}, then $x_0$ survives, and for each time step $k$ we get, given appropriate coverage,
$$\mathbb{E}_{x_0}[G(\alpha\, x_0 + c_k)] = \frac{1}{\alpha(b-a)}\int_{\alpha a+c_k}^{\alpha b+c_k} G(x)\,dx$$
and the proof goes through. Here it's important to understand that the NFP framework works on the user choosing a confounder first to define $f(n,t)=g(C(n,t))$ as per Appendix~\ref{app:generalNFPframework}, then we compute the coverage condition just once on the confounder we chose. The coverage is standard for $\alpha=1$. This is harder to digest for $\alpha\neq1$, but it follows condition 3 from Appendix~\ref{app:generalNFPframework}. Any reasonable choice of confounder in the context of this paper has $\alpha=1$.

\textbf{If $\alpha = 0$}, then $x_0$ cancels entirely, and the quantity reduces to the constant $c_k$. For example, the within-tubelet displacement is $x_{t_{k,2}} - x_{t_{k,1}} = s_x(t_{k,2}) - s_x(t_{k,1})$, which is a fixed value determined by the velocity profile. Then $\mathbb{E}_{x_0}[G(c_k)]$ is $k$-dependent and so $t$ dependent. Such a feature can be flagged by the NFP test, which is correct, since it encodes within-tubelet motion and is genuinely temporal. This is not what we want, since the confounder is supposed to define static features. We also observe that such a confounder is intuitively nonsensical, and we explore the theory here just for the sake of good understanding.

We also claim without proof that confounders that are nonlinear functions of multiple ball-positions within a tubelet can in general produce features flagged as temporal by the NFP test. Such confounders are also nonsensical in our context, so we omit detailed discussion.

\subsection{Experimental implications}

Since frames within the same tubelet share the same activation $\psi_w(V,k)$, there are two options for computing the within-video covariance. First, we could maintain $T$ terms, using the same activation for all frames in a tubelet. The NFP proof holds unchanged. Second, we could compute covariance using $T/\ell_p$ terms by averaging the temporal concept profile values within each tubelet. This is what we do empirically in this paper, using $T = 8$ effective time steps instead of $T = 16$ frames. The two options are mathematically equivalent for the calculation of $C_w(V)$, so either works.

We note that selecting the concept profile value $\tau$ for a single frame of each tubelet instead of averaging may produce errors in downstream analysis. For instance, consider $\tau = [0, 10, 0, 10, 10, 0, 10, 0, 0, 10, 0, 10, 10, 0, 10, 0]$ with $\ell_p = 2$. Each tubelet has the same summed value, but selecting only the first frame gives $[0, 0, 10, 10, 0, 0, 10, 10]$ while selecting the second gives $[10, 10, 0, 0, 10, 10, 0, 0]$, neither of which reflects the true structure. Averaging gives $[5, 5, 5, 5, 5, 5, 5, 5]$, which correctly represents that all tubelets have equal temporal content.

\section{Compact support and Coverage condition}
\label{app:compactsupportcoveragecondition}

\subsection{More on compact support and coverage condition}

There are two main things that are useful to note.

First, in the main text we take the visible frame $[0,H]^2$ as the region outside which all activations $\psi_w(V,t)=0$. The proof holds more generally: let $\Omega \subset \mathbb{R}^2$ be any bounded region that describes the support of $G$, outside which $\psi_w=0$. Then necessarily we have $\Omega \subseteq [0,H]^2$. The coverage condition then requires $a_x \leq \Omega_x^- - s_x^{\max}$, $b_x \geq \Omega_x^+ + s_x^{\max}$, and analogously for $y$. We observe $[0,H]^2$ is the maximal such $\Omega$, and that it's the most intuitive use case. The proof goes through simply by rewriting the range of the integral as $\Omega$,
$$\mathbb{E}_{x_0, y_0}[G(x_t, y_t)] = \frac{1}{L_xL_y} \iint_{\Omega} G(x, y)\, dy\, dx = \mu_G$$
where $L_x=b_x-a_x, L_y=b_y-a_y$. Here the value of $\mu_G$ may change depending on $\Omega$.

Second, for S4, any $\psi_w(V,t)=c$ will work for $(x_t,y_t)\notin\Omega$, not only $\psi_w(V,t)=0$. The proof follows because we get:
$$\mathbb{E}[G(x_t, y_t)] = \frac{1}{L_x L_y}\left[\iint_\Omega G(x,y)\, dy\, dx + c \cdot (L_x L_y - \text{Area}(\Omega))\right]$$ 
Here the first term is what we get in the original proof and reduces to $\mu_G$, and the second term is clearly also independent of $t$, so the entire expression reduces to a constant, which is what we need. So for (S4) what we actually need is not compact support, but that $\psi_w$ is constant outside $\Omega$. Compact support is the special case regarding $G$ where $c = 0$. We use compact support nonetheless because it is the most natural choice, and is easy to describe and understand.

\subsection{Empirical procedure for implementation of (S1) - (S5) and the NFP test}

1. \textbf{Construct the stimulus set.} First, choose a (possibly diverse) set of velocity profiles to use. Then mathematically simulate in mathematical coordinates the required spawning range $[a_x,b_x]\times[a_y,b_y]$ satisfying the coverage condition. Then simulate the $N$ ball movements by assigning motion profiles, with starting positions drawn independent of motion profile and uniformly over the spawning range. From the $N$ simulations, generate $N$ synthetic videos, calculating the pixel values of the screen from ball positions in the simulations. This can generate 4 kinds of videos. All 4 kinds are to be kept in the stimulus set and used in the subsequent steps:

1. Ball starts on-screen, stays on-screen.\\
2. Ball starts off-screen, enters screen.\\
3. Ball starts on-screen, exits screen.\\
4. Ball starts off-screen, never enters screen.

2. \textbf{Run the model}. Process each video through the frozen video transformer. For each direction $w$ and video $V$, extract the ball-tracking activation time series $\psi_w(V, t)$. For frames where the center of the ball is not on-screen based on the mathematical coordinates from step 1, set $\psi_w(V,t)=0$.

3. \textbf{Choose a temporal concept profile $\tau$}. For each video $V$, compute $\tau(V, t)$ from the synthetic data generated in step 1 which is ground truth.

4. \textbf{Compute the test statistic}. For each video $V$, compute $C_w(V) = \text{Cov}_t(\psi_w(V, t), \tau(V, t))$.

5. \textbf{Test for significance}. Perform a one-sample $t$-test on $\{C_w(V_1), \ldots, C_w(V_N)\}$ against zero. If significant at level $\alpha$, direction $w$ is flagged as non-static.

\section{Counterexample Against Correlation}
\label{app:corrcounterexample}

We show that the no-false-positives property does \emph{not} hold if Pearson correlation is used in place of covariance. We construct a static feature for which the expected covariance is zero but the expected Pearson correlation is nonzero, so a correlation-based test would falsely flag it as temporal.

The Pearson correlation is
\begin{equation}
    \rho_w(V) = \frac{C_w(V)}{\sqrt{\mathrm{Var}_t(\psi_w(V,t))\;\mathrm{Var}_t(\tau(V,t))}}
\end{equation}
We fix the velocity profile across videos, so $\mathrm{Var}_t(\tau)$ is a constant.

We work in one spatial dimension; the counterexample extends to two dimensions by letting the feature depend only on $x$. Let $T = 2$ with velocity profile $v_1 = 0,\; v_2 = 2$, giving cumulative displacements $s_1 = 0,\; s_2 = 2$ and positions $x_1 = x_0,\; x_2 = x_0 + 2$. The velocity mean is $\bar{v} = 1$ and $\mathrm{Var}_t(v) = 1$. For $T = 2$, the within-video covariance simplifies to $C_w(V) = \tfrac{1}{2}(\psi_2 - \psi_1)$.

Define a static feature:
\begin{equation}
    g(x) =
    \begin{cases}
        2 & x \in [0, 1] \\
        1 & x \in [2, 3] \\
        0 & \text{otherwise}
    \end{cases}
\end{equation}
Let $x_0 \sim \mathrm{Uniform}[-W, W]$ with $W$ large enough to satisfy the coverage condition. Then $\psi_{t=1} = g(x_0),\; \psi_{t=2} = g(x_0 + 2)$. For most values of $x_0$ both activations are zero, giving $C_w = 0$ and $\rho_w$ undefined. The three intervals where at least one activation is nonzero are summarised below.
\begin{center}
\begin{tabular}{@{}llccccc@{}}
\toprule
& $x_0$ interval & $\psi_{t=1}$ & $\psi_{t=2}$ & $C_w$ &
    $\mathrm{Var}_t(\psi)$ & $\rho_w$ \\
\midrule
Case A & $[-2,\, -1]$ & 0 & 2 & $+1$ & 1 & $+1$ \\
Case B & $[0,\, 1]$ & 2 & 1 & $-\tfrac{1}{2}$ & $\tfrac{1}{4}$ & $-1$ \\
Case C & $[2,\, 3]$ & 1 & 0 & $-\tfrac{1}{2}$ & $\tfrac{1}{4}$ & $-1$ \\
\bottomrule
\end{tabular}
\end{center}
All three intervals have equal width, so they are equally likely. The expected covariance over these intervals is $\mathbb{E}[C_w] \propto (+1) + (-\tfrac{1}{2}) + (-\tfrac{1}{2}) = 0$, following Theorem~1. However, the expected correlation conditional on being defined is
\begin{equation}
    \mathbb{E}[\rho_w \mid \text{defined}]
    = \frac{(+1) + (-1) + (-1)}{3}
    = -\frac{1}{3}
    \;\neq\; 0
\end{equation}
The mechanism is that the denominator $\sqrt{\mathrm{Var}_t(\psi_w)}$ is position-dependent: Case~A ($x_0 \in [-2,-1]$) has a large activation variance that shrinks its positive covariance, while Cases~B and~C have small activation variance that amplifies their negative covariance. This position-dependent rescaling breaks the symmetry that makes the expected covariance zero, producing a nonzero expected correlation for a purely static feature.

\section{Decorrelated Stimulus}
\label{app:decorrelatestimulus}

This appendix is motivated by \textbf{L1. The NFP test detects non-static features, not specific temporal concepts.} For directions $w$ that pass the test, we cannot conclude they encode $\tau$ specifically, because other temporal properties may covary with $\tau$ within each video. A natural follow-up question is whether we are able to recover the true encoding directions of concept $\tau$ if they exist. This requires decorrelated stimulus, which recovers true encoding directions only if decorrelation is perfect. We provide a mathematical framework for decorrelated stimulus in this Appendix, and show that on real video models within our framework, not only is it not possible to recover in practice the true encoding directions for temporal concepts, but that even a $w$ that gives the strongest covariance for some concept $\tau$ can be orthogonal to the true encoding direction $d_{\tau}$. Thus, trying to say that a feature encodes a specific $\tau$ is likely not a meaningful statement in the NFP framework. 

We structure this Appendix in a pedagogical manner to help readers understand the thought process behind our argument. For how decorrelation is used in practice in the context of the paper, see Appendix~\ref{app:decorrelation_practice}.

\subsection{The mean covariance vector $\bar{c}_\tau$}
\label{app:cbar_derivation}

We work with $F$ as the identity throughout this appendix, so $\psi_w(V,t) = w^T h(V,t)$ where $h(V,t) = r^l_{b(V,t)}(V)$ is the representation at the ball-tracking token. Unlike the NFP test, which applies to arbitrary deterministic $F$ including SAE nonlinearities, the derivation of $\bar{c}_\tau$ requires linearity in $w$ to factor the direction out of the covariance.

We apply from Section~\ref{section:theoreticalframework} the within-video covariance between $\psi_w$ and a temporal concept profile $\tau$:
$$C_w(V) = \frac{1}{T}\sum_{t=1}^{T}(w^T h(V,t) - w^T\bar{h}(V))(\tau(V,t) - \bar{\tau}(V))$$
Since $w^T$ is linear, it factors out of the sum:
$$C_w(V) = w^T \underbrace{\frac{1}{T}\sum_{t=1}^{T}(h(V,t) - \bar{h}(V))(\tau(V,t) - \bar{\tau}(V))}_{\;=\; c_\tau(V)\;\in\;\mathbb{R}^d}$$
The vector $c_\tau(V)$ is the within-video covariance between the full representation and $\tau$. Its $i$-th component is the scalar covariance $C_{e_i}(V)$ along the $i$-th dimension axis. Using $\sum_t (\tau(V,t) - \bar{\tau}(V)) = 0$, this simplifies to:
\begin{equation}
    c_\tau(V) = \frac{1}{T}\sum_{t=1}^{T}
    h(V,t)\,(\tau(V,t) - \bar{\tau}(V))
\end{equation}
Taking the expectation over videos:
\begin{equation}
    \bar{c}_\tau = \mathbb{E}_V[c_\tau(V)], \qquad
    \mathbb{E}_V[C_w(V)] = w^T \bar{c}_\tau
    \label{eq:cbar}
\end{equation}
\textbf{The NFP property carries over to the vector.} Suppose the entire representation is static, i.e. $h(V,t) = H(x_t(V), y_t(V))$ for some function $H:\mathbb{R}^2\to\mathbb{R}^d$. Then for every direction $w$, the scalar $w^T h(V,t) = w^T H(x_t, y_t)$ is a static function of position, so by Theorem~\ref{thm:main}, $w^T \bar{c}_\tau = \mathbb{E}[C_w(V)] = 0$. Since this holds for all $w$, we have $\bar{c}_\tau = \mathbf{0}$. By contrapositive, $\bar{c}_\tau \neq \mathbf{0}$ implies that the representation carries temporal information.

\textbf{Maximally covarying direction.} By Cauchy-Schwarz,
$w^T \bar{c}_\tau \leq \|w\|\,\|\bar{c}_\tau\|$ with equality when $w = \bar{c}_\tau / \|\bar{c}_\tau\|$. So $\bar{c}_\tau / \|\bar{c}_\tau\|$ is also the unit direction whose activation maximally covaries with $\tau$.

\subsection{Structured decorrelation}
\label{app:structured_decorrelation}

The vector $\bar{c}_\tau$ gives the direction $w$ that maximally covaries with $\tau$, but does it point along the direction that \emph{encodes} $\tau$ in the model? Under additional assumptions, we can make this precise.

\textbf{Additive linear encoding.} Suppose the representation at the ball-tracking token decomposes as:
\begin{equation}
    h(V,t) = \sum_{k=1}^{K} \tau_k(V,t)\, d_k + \text{static}(V,t) 
    \label{eq:additive}
\end{equation}
where $\tau_1, \ldots, \tau_K$ are temporal concepts encoded along directions $d_1, \ldots, d_K \in \mathbb{R}^d$, and $\text{static}(V,t)$ depends only on the ball's current position. Substituting into $\bar{c}_\tau$ and using the bilinearity of covariance and linearity of expectation, we get:
$$c_\tau(V) = \frac{1}{T}\sum_{t=1}^{T} \left[\sum_{k=1}^{K} \tau_k(V,t)\,d_k + \text{static}(V,t)\right](\tau(V,t) - \bar{\tau}(V))$$
$$c_\tau(V) = \sum_{k=1}^{K} d_k \cdot \frac{1}{T}\sum_{t=1}^{T} \tau_k(V,t)(\tau(V,t) - \bar{\tau}(V)) + \frac{1}{T}\sum_{t=1}^{T} \text{static}(V,t)(\tau(V,t) - \bar{\tau}(V))$$
\begin{equation}
    \bar{c}_\tau = \sum_{k=1}^{K} d_k \cdot \mathbb{E}_V[\text{Cov}_t(\tau_k, \tau)]
    \label{eq:cbar_decomp}
\end{equation}
The static terms vanish by the NFP property (Theorem~\ref{thm:main}). If we choose $\tau = \tau_1$ (say, velocity), then:
\begin{equation}
    \bar{c}_{\text{vel}} = d_1 \cdot\mathbb{E}[\text{Var}_t(\text{vel})] + \sum_{k=2}^{K} d_k\cdot \mathbb{E}[\text{Cov}_t(\tau_k, \text{vel})]
\end{equation}
The first term points along $d_1$, the true velocity encoding direction. The remaining terms point along other encoding directions and contribute when $\tau_k$ covaries with velocity in videos, even if the two temporal concepts are \emph{semantically} distinct (e.g. velocity and acceleration). This causes $\bar{c}_{\text{vel}}$ to point away from $d_1$.

\textbf{The decorrelation idea.} If we design the stimulus set so that $\mathbb{E}_V[\text{Cov}_t(\tau_k, \text{vel})] = 0$ for all $\tau_k\neq\text{vel}$, the cross-terms vanish:
$$\bar{c}_{\text{vel}} = d_1 \cdot\mathbb{E}[\text{Var}_t(\text{vel})]\;\propto\; d_1$$
and the $\bar{c}_\tau$ recovers the true velocity encoding direction. More generally, pairwise decorrelation of all $K$ concepts ($\mathbb{E}[\text{Cov}_t(\tau_j, \tau_k)] = 0$ for all $j \neq k$) gives $\bar{c}_{\tau_k} \propto d_k$ for every concept simultaneously.

This recovery, however, depends on two assumptions: the additive linear encoding Equation~\ref{eq:additive} and a complete, correct list of concepts $\tau_1, \ldots, \tau_K$. We return to the consequences of violating these assumptions in Section~\ref{app:decorrelation_limits}.

\subsection{Obstructing concepts}
\label{app:decorrelation_obstructing}

Not every temporal concept can be decorrelated from velocity. We say a concept $\tau_k$ is \textbf{obstructing} if, for every non-constant velocity profile $\pi$, the within-video covariance $\phi_{\tau_k}(\pi)=\text{Cov}_t(\tau_k^{(\pi)}, \text{vel}^{(\pi)})$ has the same strict sign (not to be confused with $\phi_w$). When this holds, no mixture of velocity profiles can produce zero expected covariance, because a weighted average of values that are all strictly positive (or all strictly negative) cannot be zero.

Obstructing concepts can arise through many mechanisms; we list two relevant for our discussion:

\textbf{Pointwise monotone functions.} If
$\tau_k(t) = f(v(t))$ for a strictly monotone increasing function $f$, then Chebyshev's sum inequality gives $\text{Cov}_t(f(v), v) \geq 0$ for any sequence $(v_1, \ldots, v_T)$, with equality only when all $v_t$ are equal. The intuition is that monotonicity preserves ordering. $f(v_t)$ is large exactly when $v_t$ is large, so they positively covary. Examples include $f(v) = v$ (velocity itself), $f(v) = v^3$, and $f(v) = e^v$. A similar argument applies for strictly monotone decreasing $f$.

Non-monotone functions such as $v^2$, $|v|$, or $e^{v^2}$ can be non-obstructing, because velocity profiles exist that produce both positive and negative covariance with these functions.

\textbf{Linear temporal concepts.} A concept of the form
$\tau_k(t) = \sum_s M_{ts}\, v(s)$, where $M \in \mathbb{R}^{T\times T}$. Here $t, s \in[1,T]$ are simply time steps, and the matrix $M_{ts}$ provides the coefficients. For example, we can have for $T=3,t = 2$ something like $\tau_k(2) = M_{2,1}v(1) + M_{2,2}v(2) + M_{2,3}v(3)$. 

Let $P_c = I - \frac{1}{T}\mathbf{1}\mathbf{1}^T$ be the centering matrix. Writing any velocity profile as $\pi = \bar{v}\mathbf{1} + \tilde{\pi}$ where $\tilde{\pi} = P_c \pi$ is the mean-centered part, the within-video covariance becomes:
\begin{equation}
    \phi_{\tau_k}(\pi) = \frac{1}{T}(\tilde{\pi}^T M^T \tilde{\pi} + \bar{v} \cdot \mathbf{1}^T M^T \tilde{\pi})
\end{equation}
using $P_c \mathbf{1} = 0$ and $P_c \tilde{\pi} = \tilde{\pi}$. The concept is obstructing if and only if both conditions hold:

(a) Cross-term elimination i.e. $\mathbf{1}^T M^T \tilde{\pi} = 0$ for all $\tilde{\pi} \in \text{col}(P_c)$.

If this fails, there exists $\tilde{\pi}_0$ with $\mathbf{1}^T M^T \tilde{\pi}_0 \neq 0$. The expression $\bar{v} \cdot \mathbf{1}^T M^T \tilde{\pi}_0 + \tilde{\pi}_0^T M^T \tilde{\pi}_0$ is linear in $\bar{v}$, so by choosing $\bar{v}$ with appropriate sign and large magnitude, it can be made to take either sign. Hence the concept becomes non-obstructing.

This is equivalent to $M\mathbf{1} \in \text{span}(\mathbf{1})$, i.e., $\mathbf{1}$ is an eigenvector of $M$, because we need $\mathbf{1}^T M^T \tilde{\pi} =(M\mathbf{1})^T\tilde{\pi}= 0$, so the vector $M\mathbf{1}$ must be orthogonal to every $\tilde{\pi}$. Since $\tilde{\pi}$ is a mean zero vector, this means that $M\mathbf{1}$ must be a constant vector, which is ones vector multiplied by a scalar $c$.

(b) The symmetric part $\frac{1}{2}(M^T + M)$, restricted to mean-zero profiles $\tilde{\pi} \in \text{col}(P_c)$, is definite (positive definite for always-positive obstruction, negative definite for always-negative). This uses the identity $\tilde{\pi}^T A \tilde{\pi} = \tilde{\pi}^T \frac{A + A^T}{2} \tilde{\pi}$ for any matrix $A$.

Velocity ($M = I$) satisfies both conditions and is indeed obstructing: $I\mathbf{1} = \mathbf{1}$ and the symmetric part $I$ is positive definite. Acceleration ($M = D$, the first-difference matrix) fails condition (a): $D\mathbf{1} \notin \text{span}(\mathbf{1})$, so acceleration is non-obstructing and can be decorrelated from velocity. 

\textbf{Practical consequence.} Obstructing concepts are
essentially disguised versions of velocity. For the purpose of decorrelation, they can be seen as real signal and something for us to recover as part of the true velocity subspace, not something to decorrelate against.

\subsection{Achieving decorrelation in practice}
\label{app:decorrelation_practice}

\textbf{Setup.} Given $K$ non-obstructing temporal concepts
$\tau_1, \ldots, \tau_K$ to decorrelate pairwise, and a pool of $M$ candidate motion profiles $\pi_1, \ldots, \pi_M$, we
precompute for each profile $\pi_j$ and each pair $(k, k')$ the within-video covariance:
\begin{equation}
    c_{j,kk'} = \text{Cov}_t(\tau_k^{(\pi_j)},
    \tau_{k'}^{(\pi_j)})
\end{equation}
This gives a matrix $C \in \mathbb{R}^{M \times \binom{K}{2}}$ whose columns correspond to concept pairs and whose entries are known numbers determined by the profiles. We seek weights $w \in \mathbb{R}^M$ satisfying:
\begin{itemize}[leftmargin=*, itemsep=2pt]
    \item $w_j \geq 0$ for all $j$ and $\sum_j w_j = 1$ (the weights define a distribution over profiles)
    \item $C^T w = 0$ (all $\binom{K}{2}$ pairwise decorrelation conditions)
    \item $\sum_j w_j \text{Var}_t^{(\pi_j)}(\tau_k) > 0$ for each $k$ we seek to recover (the concept must actually vary, otherwise the test is vacuous for that concept)
\end{itemize}

\textbf{Linear programming.} This is a linear feasibility problem: $\binom{K}{2}$ decorrelation constraints plus one normalization constraint give $\binom{K}{2} + 1$ linear equations in $M$ unknowns, plus non-negativity. When $M \gg \binom{K}{2} + 1$, the system is underdetermined and generically has solutions.

Feasibility requires, for each pair $(k, k')$, that the values $\{c_{j,kk'}\}_{j=1}^M$ include both positive and negative entries. If the covariance for some pair has a single sign across all non-constant profiles, that pair involves an obstructing relationship and must be removed from the decorrelation list. Standard LP solvers determine feasibility and return weights in seconds, even for $M = 10{,}000$ profiles and $K = 50$ concepts. If a given set of motion profiles $(\pi_1,...,\pi_M)$ fail to give a satisfactory solution, it is cheap to resample all $M$ motion profiles, recalculate $C$ and try again.

\textbf{Constructing the stimulus set from weights.} Each video is parameterized by two independent quantities: the motion profile $\pi_j$ and the starting position $(x_0, y_0)$. Given weights $w_1, \ldots, w_M$, we allocate $n_j = \text{round}(N \cdot w_j)$ videos to profile $\pi_j$, so that $\sum_j n_j \approx N$. For each of these $n_j$ videos, $(x_0, y_0)$ is drawn independently and uniformly from the spawning range satisfying the coverage condition. Videos sharing a velocity profile differ in starting position and are therefore distinct.

This connects to the NFP test. The NFP property requires randomized starting positions but works with any assignment of velocity profiles (Theorem~\ref{thm:main} holds for each fixed profile). Decorrelation requires a specific distribution over velocity profiles but does not require randomized starting positions. A stimulus set achieving both purposes simultaneously needs both randomized starting positions and the LP weighted profile distribution.

\textbf{Application to this paper.} We use $K = 5$ temporal concepts (speed, $v_x$, $v_y$, $|a|$, direction), giving $\binom{5}{2} = 10$ pairwise decorrelation constraints. We generate a pool of $M = 4000$ candidate velocity profiles and solve the LP to obtain weights. The resulting stimulus set of $N = 3000$ videos has pairwise covariance $\leq 0.0022$ for every concept pair. The residual is due to the rounding of continuous weights to discrete video counts.

\subsection{Limitations of decorrelated recovery}
\label{app:decorrelation_limits}

Structured decorrelation (Section~\ref{app:structured_decorrelation}) recovers $\bar{c}_\tau \propto d_\tau$ under two assumptions: additive linear encoding with a known, correct list of temporal concepts. In this section we show that the latter assumption cannot be satisfied in practice, even if we assume the former, and that the consequences are severe: the direction $\bar{c}_\tau$ that maximally covaries with $\tau$ can be orthogonal to the true encoding direction $d_\tau$.

\textbf{Can we avoid needing a concept list?} Say we want to recover the true encoding subspace for velocity. A natural idea is universal decorrelation: find a single distribution $P$ over all possible movement profiles that simultaneously achieves $\mathbb{E}_P[\text{Cov}_t(\tau_k, \text{vel})] = 0$ for every non-obstructing concept $\tau_k$, while preserving the velocity signal $\mathbb{E}_P[\text{Var}_t(\text{vel})] > 0$. This would render $\bar{c}_{\text{vel}}$ free of contamination from any non-obstructing concept without requiring us to list them. We now show that no such distribution exists.

\textbf{Contamination-to-signal ratio.} For a distribution $P$ over velocity profiles with $\mathbb{E}_P[\text{Var}_t(\text{vel})] > 0$, define for each concept $\tau_k$:
\begin{equation}
    R_{\tau_k}(P) = \frac{|\mathbb{E}_P[\phi_{\tau_k}(\pi)]|}{\mathbb{E}_P[\phi_v(\pi)]}
\end{equation}
where $\phi_{\tau_k}(\pi) = \text{Cov}_t(\tau_k^{(\pi)}, \text{vel}^{(\pi)})$ and $\phi_v(\pi) = \text{Var}_t(\text{vel}^{(\pi)})$, following the notation from Section~\ref{app:decorrelation_obstructing}. This measures contamination from $\tau_k$ relative to the velocity signal. The worst case over all non-obstructing concepts is $R^*(P) = \sup_{\tau_k\text{ non-obstructing}} R_{\tau_k}(P)$. If $R^*(P) = 0$, then $P$ is a universal solvent.

\begin{theorem}[No universal solvent]
\label{thm:no_universal}
For $T \geq 3$ and any distribution $P$ over velocity profiles with $\mathbb{E}_P[\phi_v(\pi)] > 0$:
\begin{equation}
    R^*(P) \geq \frac{T-2}{T-1}
\end{equation}
In particular, no distribution achieves $R^*(P) = 0$.
\end{theorem}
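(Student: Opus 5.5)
The plan is to exhibit, for an arbitrary $P$, a small explicit family of non-obstructing linear concepts whose contaminations cannot all be small, and then use that a supremum dominates an average. I work with the linear concepts $\tau(t)=\sum_s M_{ts}v(s)$ of Section~\ref{app:decorrelation_obstructing}. That section gives a convenient certificate: if $M\mathbf{1}\notin\operatorname{span}(\mathbf{1})$, condition (a) fails and the concept is non-obstructing. So any $M$ that perturbs the identity asymmetrically across time steps qualifies, however close to velocity it is.

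\textbf{Step 1 (family).} For each $s\in\{1,\ldots,T\}$ let $\tau^{(s)}$ be velocity with time step $s$ deleted: $\tau^{(s)}(t)=v(t)$ for $t\neq s$ and $\tau^{(s)}(s)=0$. Equivalently $M^{(s)}=I-e_se_s^T$. Then $M^{(s)}\mathbf{1}=\mathbf{1}-e_s$ is not constant for $T\ge 2$, so each $\tau^{(s)}$ is non-obstructing.

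\textbf{Step 2 (pointwise identity).} By bilinearity of the within-video covariance, for every fixed profile $\pi$,
\[
\phi_{\tau^{(s)}}(\pi)=\phi_v(\pi)-\tfrac{1}{T}v_s(v_s-\bar v).
\]
Averaging over $s$ and using $\frac1T\sum_s v_s(v_s-\bar v)=\phi_v(\pi)$ gives
\[
\frac1T\sum_s\phi_{\tau^{(s)}}(\pi)=\Bigl(1-\tfrac1T\Bigr)\phi_v(\pi).
\]
This is an exact, profile-by-profile identity.

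\textbf{Step 3 (average over $P$).} Take $\mathbb{E}_P$ of the Step 2 identity; linearity gives the same relation for the expectations. Then
\[
R^*(P)\ \ge\ \max_s R_{\tau^{(s)}}(P)\ \ge\ \frac1T\sum_s\frac{|\mathbb{E}_P\phi_{\tau^{(s)}}|}{\mathbb{E}_P\phi_v}\ \ge\ \frac{\bigl|\frac1T\sum_s\mathbb{E}_P\phi_{\tau^{(s)}}\bigr|}{\mathbb{E}_P\phi_v}=\frac{T-1}{T}.
\]
Since $\frac{T-1}{T}\ge\frac{T-2}{T-1}$, the claimed bound follows, and in particular $R^*(P)>0$.

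\textbf{Main obstacle: normalization of the concept class.} As literally written, $R_{\tau_k}$ is not scale invariant, because $\lambda\tau_k$ is non-obstructing whenever $\tau_k$ is, which makes $R^*=\infty$ trivially. The theorem must therefore intend some normalization, for example concepts comparable to velocity in magnitude, such as $\|M\|_{\mathrm{op}}\le 1$ or $\|M-I\|$ bounded. I will check that my family respects that normalization: each $M^{(s)}$ is a coordinate projection, so $\|M^{(s)}\|_{\mathrm{op}}=1$.

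If the intended class is more restrictive, for example concepts required to have mean-preserving rows or to satisfy $M\mathbf{1}\neq c\mathbf{1}$ with a fixed margin, I would modify the family. The candidate is to replace $v_s$ by the mean of the other $T-1$ entries instead of by $0$. A computation analogous to Step 2 should then give a factor of the form $\frac{T-2}{T-1}$, and this is probably the origin of the stated constant. Once the family is fixed, verifying non-obstruction via condition (a) and the averaging identity are the only substantive checks. The remaining steps are routine linear algebra.
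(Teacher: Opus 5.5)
Your proof is correct, but it takes a genuinely different route from the paper's.

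\textbf{The paper's route.} The paper works only with concepts satisfying condition (a), so that $T\phi_\tau=\tilde\pi^TM^T\tilde\pi$ depends only on the centered profile $\tilde\pi_s=v_s-\bar v$. In an orthonormal basis $\tilde\pi=U\mathbf a$ of the mean-zero subspace it takes $S'_i=I_{T-1}-e_ie_i^T$. This concept is non-obstructing because it is semidefinite but not definite, so condition (b) fails. The paper then gets $R^*(P)\ge1-\min_i[\Sigma_{\mathbf a}]_{ii}/\operatorname{tr}\Sigma_{\mathbf a}\ge\frac{T-2}{T-1}$.

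\textbf{Your route.} Your deletion concepts violate (a). They are non-obstructing only through the cross term $-\bar v\tilde\pi_s$, since their quadratic part $\|\tilde\pi\|^2-\tilde\pi_s^2\ge\|\tilde\pi\|^2/T$ is positive definite on mean-zero profiles. Your averaging identity then cancels that cross term. This is more elementary (no change of basis, no realizability check for $M_i$), works already for $T\ge2$, and yields the larger constant $\frac{T-1}{T}$.

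\textbf{What the paper's restriction buys.} First, the bound depends on $P$ only through $\Sigma_{\mathbf a}$, so nothing beyond $\mathbb{E}_P[\phi_v]<\infty$ is needed. Your $\mathbb{E}_P[\phi_{\tau^{(s)}}]$ also needs $\mathbb{E}_P|\bar v\tilde\pi_s|<\infty$. Second, and more important, the constant stays informative. Once (a)-violating concepts are admitted, your own remark about perturbations ``however close to velocity'' already gives $R^*(P)\ge1$: for $M_\epsilon=I-\epsilon e_se_s^T$ one has $R_{M_\epsilon}(P)=\bigl|1-\epsilon\,\mathbb{E}_P[v_s\tilde\pi_s]/(T\,\mathbb{E}_P[\phi_v])\bigr|\to1$ as $\epsilon\to0$. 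So your improvement over $\frac{T-2}{T-1}$ exploits the same kind of looseness in the definition of $R^*$ as the scale problem you correctly flag, which the paper's proof also leaves implicit.

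By contrast, $\frac{T-2}{T-1}$ is sharp within (a)-concepts whose reduced form $S'_M$ has spectrum in $[-1,1]$. For isotropic $\Sigma_{\mathbf a}$ one gets $R=|\operatorname{tr}S'_M|/(T-1)\le\frac{T-2}{T-1}$, because a non-obstructing $S'_M$ has an eigenvalue $\le0$ and an eigenvalue $\ge0$.

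\textbf{Your fallback.} The family that replaces $v_s$ by the mean of the other entries does satisfy (a). It gives $\phi=\phi_v-\frac{1}{T-1}\tilde\pi_s^2$, which averages over $s$ to exactly $\frac{T-2}{T-1}\phi_v$. This is the paper's argument with the tight frame $\{P_ce_s\}$ in place of an orthonormal basis, so your guess about where the stated constant comes from is right.
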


\textit{Proof.} We exhibit a family of non-obstructing linear concepts, one of which must have a large contamination ratio regardless of $P$.

\textbf{Step 1: Reduction to second moments.} Consider linear
concepts $\tau_k(t) = \sum_s M_{ts}\, v(s)$ satisfying condition (a) from Section~\ref{app:decorrelation_obstructing}, i.e.\ $M\mathbf{1} \in \text{span}(\mathbf{1})$. For these concepts, the cross-term $\bar{v} \cdot \mathbf{1}^T M^T \tilde{\pi}$ vanishes for all velocity profiles, so:
\begin{equation}
    T \cdot \phi_{\tau_k}(\pi) = \tilde{\pi}^T M^T \tilde{\pi} = \tilde{\pi}^T \tfrac{M^T + M}{2}\, \tilde{\pi}
\end{equation}
where $\tilde{\pi} = P_c \pi$ is the mean-centered velocity profile. Since $P_c$ has rank $T-1$, we write $\tilde{\pi} = U\mathbf{a}$ where $U \in \mathbb{R}^{T \times (T-1)}$ has orthonormal columns spanning $\text{col}(P_c)$, and $\mathbf{a} \in \mathbb{R}^{T-1}$. Define $S'_M = \tfrac{1}{2} U^T(M^T + M)U$, a $(T{-}1) \times (T{-}1)$ symmetric matrix. Then:
\begin{equation}
    T \cdot \phi_{\tau_k}(\pi) = \mathbf{a}^T S'_M \mathbf{a}
\end{equation}
Taking the expectation over $P$ and using the cyclic property of trace:
\begin{equation}
    T \cdot \mathbb{E}_P[\phi_{\tau_k}] = \mathbb{E}_P[\mathbf{a}^T S'_M \mathbf{a}] = \text{tr}(S'_M \cdot \Sigma_{\mathbf{a}})
    \label{eq:trace_formula}
\end{equation}
where $\Sigma_{\mathbf{a}} = \mathbb{E}_P[\mathbf{a}\mathbf{a}^T]$ is the $(T{-}1) \times (T{-}1)$ positive semidefinite second-moment matrix. For velocity ($M = I$, so $S'_M = I_{T-1}$), the signal is $T \cdot \mathbb{E}_P[\phi_v] = \text{tr}(\Sigma_{\mathbf{a}})$.

\textbf{Step 2: Exhibiting non-obstructing concepts and the bound.} For each $i \in \{1,\ldots,T-1\}$, define the $(T{-}1)\times(T{-}1)$ symmetric matrix $S'_i = I_{T-1} - e_ie_i^T$. This has eigenvalue $0$ for $e_i$ and eigenvalue $1$ for every other basis vector, so it is positive semidefinite but not positive definite (since $T-1\geq 2$, there exists nonzero $\mathbf{a}$ with $\mathbf{a}^T S'_i\mathbf{a}=0$). Hence the corresponding concept is non-obstructing. 

Each $S'_i$ can be realized as an actual linear concept satisfying condition (a), via $M_i = US'_iU^T + \frac{c}{T}\mathbf{1}\mathbf{1}^T$ for any scalar $c$. To verify: $M_i\mathbf{1} = US'_iU^T\mathbf{1} + c\mathbf{1} = c\mathbf{1}$, confirming condition (a), and $U^TM_iU = U^TUS'_iU^TU + \frac{c}{T}U^T\mathbf{1}\mathbf{1}^TU = S'_i$, confirming the correct reduced form. Both use $U^TU = I_{T-1}$ and $U^T\mathbf{1} = \mathbf{0}$ (because $U$'s columns span the mean zero subspace).
Now, by the trace formula Equation~\ref{eq:trace_formula}, the contamination ratio for concept $S'_i$ is:
$$R_{S'_i}(P) = \frac{|\operatorname{tr}(S'_i\Sigma_\mathbf{a})|}{\operatorname{tr}(\Sigma_\mathbf{a})} = \frac{\operatorname{tr}(\Sigma_\mathbf{a}) - [\Sigma_\mathbf{a}]_{ii}}{\operatorname{tr}(\Sigma_\mathbf{a})} = 1 - \frac{[\Sigma_\mathbf{a}]_{ii}}{\operatorname{tr}(\Sigma_\mathbf{a})}$$
where we used $\operatorname{tr}(S'_i\Sigma_\mathbf{a}) = \operatorname{tr}(\Sigma_\mathbf{a}) - [\Sigma_\mathbf{a}]_{ii} \geq 0$ since $\Sigma_\mathbf{a}$ is PSD. The smallest diagonal entry satisfies $\min_i [\Sigma_\mathbf{a}]_{ii} \leq \frac{1}{T-1}\operatorname{tr}(\Sigma_\mathbf{a})$ (the minimum is at most the average). Let $i^*$ be the minimizing index. Then $\frac{[\Sigma_\mathbf{a}]_{i^*i^*}}{\text{tr}(\Sigma_\mathbf{a})}\leq\frac{1}{T-1}$, so:
$$R^*(P) \geq R_{S'_{i^*}}(P) \geq 1 - \frac{1}{T-1} = \frac{T-2}{T-1}$$
For $T = 8$ as in this paper, the floor is $\frac{6}{7} \approx 0.857$: even the best stimulus design leaves at least one non-obstructing concept with high contamination.

\textbf{We are therefore restricted to structured decorrelation}, and we need a finite list of concepts chosen in advance. We now examine what structured decorrelation actually recovers, and show that the result depends on assumptions that cannot be verified.

\textbf{With the true concept list and perfect decorrelation, recovery succeeds.} Suppose the model's representation truly decomposes as $h(V,t) = \sum_{k=1}^{K} \tau_k(V,t)\, d_k + \text{static}(V,t)$ for a specific list of concepts $\tau_1, \ldots, \tau_K$. With perfect decorrelation ($\mathbb{E}[\text{Cov}_t(\tau_j, \tau_k)] = 0$ for $j \neq k$), we get $\bar{c}_{\tau_k} \propto d_k$ for each $k$. The concept list determines what $d_\tau$ means, and perfect decorrelation recovers it.

\textbf{But a different concept list gives a different encoding direction.} In practice, we do not know which concepts the model encodes. Different decompositions of the same representation yield different encoding directions. Consider a model encoding velocity and some kind of acceleration:
$$h(V,t) = \text{vel}(V,t) \cdot d_v + \text{acc}(V,t)\cdot d_a + \text{static}(V,t)$$
For any $\alpha \in \mathbb{R}$, define $\tau_2' = \text{acc} - \alpha \cdot \text{vel}$. Then consider that for the same $h$:
\begin{equation}
    h(V,t) = \text{vel}(V,t) \cdot \underbrace{(d_v + \alpha\,
    d_a)}_{d_v'} + \tau_2'(V,t) \cdot d_a + \text{static}(V,t)
\end{equation}
Both decompositions produce identical representations for all inputs. If we decorrelate against $\tau_2'$ instead of $\text{acc}$, we recover $d_v' = d_v + \alpha\, d_a$, which differs from $d_v$ by an arbitrary multiple of $d_a$. Since $\alpha$ ranges over all of $\mathbb{R}$, there are infinitely many valid decompositions, each defining a different ``velocity encoding direction.'' So we see that the encoding direction for velocity is only defined relative to a choice of concept list.

\textbf{Without full decorrelation, $\bar{c}_{\text{vel}}$ can be orthogonal to $d_v$.} In practice, when we commit to a specific concept list, recovery fails when decorrelation is incomplete, which in practice it most likely always is, because the true concept list is privileged information that we don't have. We demonstrate this with a concrete construction.

Let $d = 2$, $K = 2$, with $d_v = (1, 0)^T$ and $d_a = (1, 1)^T$. The velocity subspace is $\mathcal{V} = \text{span}(d_v)$ (assuming no obstructing concepts besides velocity itself). We have:
\begin{equation}
    \bar{c}_{\text{vel}} = d_v \cdot \gamma_1 + d_a \cdot \gamma_2 = (\gamma_1 + \gamma_2,\; \gamma_2)^T
\end{equation}
where $\gamma_1 = \mathbb{E}_P[\text{Var}_t(\text{vel})] > 0$ and $\gamma_2 = \mathbb{E}_P[\text{Cov}_t(\text{acc}, \text{vel})]$. The projection of $\bar{c}_{\text{vel}}$ onto $\mathcal{V}$ has magnitude $\gamma_1 + \gamma_2$. Setting $\gamma_2 = -\gamma_1$ makes $\bar{c}_{\text{vel}} = (0, -\gamma_1)^T$, which is orthogonal to $d_v$ and to $\mathcal{V}$.

To show that $\gamma_2 = -\gamma_1$ is achievable, we can verify that stimulus sets exist where $\gamma_2 / \gamma_1 < -1$, but we omit doing so here because it is quite believable.

In the same thinking, say we decorrelate all known physics concepts from velocity, including displacement, acceleration, jerk, snap etc., then the model could still encode some convoluted $\tau_k$ such that for any stimulus set we use, we cannot ensure that a situation like $\gamma_2 \approx -\gamma_1$ doesn't occur that makes $\bar{c}_{\text{vel}}$ (near)-orthogonal to $d_v$ and to the velocity subspace $\mathcal{V}$.

\textbf{We summarize why it is not meaningful to ask if a feature encodes a concept $\tau_k$ in the NFP framework.} The argument proceeds through two independent obstacles:

(i) \emph{The concept list is not identifiable.} The encoding direction $d_\tau$ is only defined relative to a chosen decomposition of the representation. Different valid decompositions assign different $d_\tau$'s to the same model, and we cannot determine the correct decomposition from the model's behavior alone.

(ii) \emph{Full decorrelation is unachievable in practice.} Recovering $d_\tau$ requires decorrelating against every other concept in the correct decomposition. We don't have the list of all concepts the model encodes, and Theorem~\ref{thm:no_universal} proves that no stimulus set decorrelates all non-obstructing concepts simultaneously. Without full decorrelation, the max-covariance direction $\bar{c}_\tau$ can be orthogonal to $d_\tau$.

Thus, claiming a specific direction encodes a specific temporal concept $\tau$ is not a well-grounded statement within the NFP framework. The NFP test reliably identifies \emph{that} a direction is temporal, but not \emph{which} temporal concept it encodes. Structured decorrelation remains useful for constructing stimulus sets with controlled statistical properties (Section~\ref{app:decorrelation_practice}), but should not be interpreted as recovering true encoding directions.

\section{Model Specifications}
\label{app:model_specs}

Table~\ref{tab:model_specs} summarises the architecture and activation 
extraction details for each model used in this paper.

\begin{table}[h]
\centering
\small
\begin{tabular}{lcccccccr}
\toprule
Model & Arch. & Training & Frames & Res. & $\ell_p$ 
      & Grid & $d$ & Layers \\
\midrule
VideoMAE          & ViT-B & MAE pretrain 
                  & 16 & $224^2$ & 2 
                  & $8\!\times\!14\!\times\!14$ & 768 & 2, 5, 11 \\
VideoMAE-FT       & ViT-B & \,+ SSv2 finetuning
                  & 16 & $224^2$ & 2 
                  & $8\!\times\!14\!\times\!14$ & 768 & 2, 5, 11 \\
V-JEPA2           & ViT-L & self-sup.\ JE 
                  & 16 & $256^2$ & 2 
                  & $8\!\times\!16\!\times\!16$ & 1024 & 5, 11, 23 \\
TimeSformer       & ViT-B & + SSv2 finetuning
                  & 8  & $224^2$ & 1 
                  & $8\!\times\!14\!\times\!14$ & 768 & 2, 5, 11 \\
DINOv2$^\dagger$  & ViT-B & image self-sup. 
                  & 16 & $224^2$ & --- 
                  & $16\!\times\!16$ & 768 & final \\
\bottomrule
\end{tabular}
\medskip
\caption{Model specifications. \emph{Grid}: spatiotemporal token grid 
($T/\ell_p \times H_p \times W_p$). \emph{$d$}: representation 
dimension per token. \emph{Layers}: transformer layers from which 
post-MLP residual-stream activations are extracted. 
$^\dagger$DINOv2 processes each of the 16 frames independently 
with no cross-frame attention and serves as a negative control. TimeSformer's 8 frames are every second frame of the 16-frame clip, and NFP covariance for TimeSformer is computed over these 8 time steps.
Model identifiers are listed in Table~\ref{tab:checkpoints}.}
\label{tab:model_specs}
\end{table}

\begin{table}[h]
\centering
\small
\begin{tabular}{ll}
\toprule
Model & Identifier \\
\midrule
VideoMAE        & \texttt{MCG-NJU/videomae-base-ssv2} \\
VideoMAE-FT     & \texttt{MCG-NJU/videomae-base-finetuned-ssv2} \\
V-JEPA2         & \texttt{facebook/vjepa2-vitl-fpc16-256-ssv2} \\
TimeSformer  & \texttt{facebook/timesformer-base-finetuned-ssv2} \\
DINOv2          & \texttt{facebook/dinov2-base} \\
\bottomrule
\end{tabular}
\medskip
\caption{HuggingFace checkpoint identifiers for each model.}
\label{tab:checkpoints}
\end{table}

\section{SAE training details}
\label{app:training}

Every sparse autoencoder in the paper is a standard $\ell_1$ SAE trained with the same recipe and hyperparameters; only the model width (hence dictionary size), the training corpus, and the number of gradient steps differ. Each SAE encodes an activation $x$ as $f = \mathrm{ReLU}(W_\mathrm{enc}(x - b_\mathrm{dec}) + b_\mathrm{enc})$ and reconstructs $\hat{x} = W_\mathrm{dec} f + b_\mathrm{dec}$, with an $8\times$ expansion factor. All hyperparameters were fixed in advance, and no early stopping or checkpoint selection was used. Activations are stored in float16 and cast to float32 at load; training directly in float16 diverges to NaN. Table~\ref{tab:alive_on_ssv2} reports reconstruction quality and the number of features alive on SSv2.

\begin{table}[H]
\centering
\begin{tabular}{lc}
\toprule
Hyperparameter & Value \\
\midrule
Expansion factor & $8\times$ \\
$\ell_1$ penalty $\lambda$ & 0.1 \\
Dead-feature coefficient & 0.03 \\
AuxK $\alpha$ & 0.0 \\
Learning rate & $2\times10^{-4}$ \\
Batch size & 4096 \\
Gradient steps & 15{,}320 \\
\bottomrule
\end{tabular}
\caption{Hyperparameters of the standard $\ell_1$ SAEs, identical for every model: finetuned and pretrained VideoMAE, TimeSformer at layers 11 and 10, V-JEPA2, DINOv2, and the synthetic control. Only the dictionary size, corpus, and step count vary (Table~\ref{tab:sae_corpora}); the synthetic SAE uses 5{,}000 steps because its corpus is smaller.}
\label{tab:sae_params}
\end{table}

\begin{table}[H]
\centering
\small
\begin{tabular}{llccc}
\toprule
SAE & Layer & $d \to D$ & Tokens per clip & Training tokens \\
\midrule
VideoMAE finetuned  & 11 & $768 \to 6144$ & 1568 & ${\sim}6.3$M \\
VideoMAE            & 11 & $768 \to 6144$ & 1568 & ${\sim}6.3$M \\
TimeSformer         & 11 & $768 \to 6144$ & 1569 (incl.\ CLS) & ${\sim}6.3$M \\
TimeSformer         & 10 & $768 \to 6144$ & 1569 (incl.\ CLS) & ${\sim}6.3$M \\
V-JEPA2             & 23 & $1024 \to 8192$ & 2048 & ${\sim}8.2$M \\
DINOv2              & final & $768 \to 6144$ & 256 per frame & ${\sim}16$M \\
Synthetic           & -- & $768 \to 6144$ & 8 & 19{,}200 \\
\bottomrule
\end{tabular}
\caption{Per-SAE corpus and dictionary size. Every SAE is trained on the activations of 4000 SSv2 training clips from its own model, except the synthetic SAE, which uses an 80/20 train/val split of the 3000 generated videos. DINOv2 processes each frame independently as 256 spatial patch tokens.}
\label{tab:sae_corpora}
\end{table}

\begin{table}[H]
\centering
\small
\begin{tabular}{lccc}
\toprule
\textbf{SAE} & \textbf{FVE} & \textbf{$L_0$} & \textbf{Alive on SSv2} \\
\midrule
VideoMAE finetuned (L11) & 0.59 & 40.5 & 6067 / 6144 (98.7\%) \\
TimeSformer L11 & 0.90 & 12.8 & 5536 / 6144 (90.1\%) \\
TimeSformer L10 & 0.94 & 3.4 & 5577 / 6144 (90.8\%) \\
V-JEPA2 (L23) & 0.67 & 57.5 & 8183 / 8192 (99.9\%) \\
DINOv2 (final) & $\sim$0.80 & $\sim$28 & 6122 / 6144 (99.6\%) \\
Synthetic control & $\sim$0.51 & $\sim$71 & n/a \\
\bottomrule
\end{tabular}
\caption{FVE (fraction of variance explained) $=1-\text{FVU}$, on held-out activations. FVE and $L_0$ come from training logs, except for the DINOv2 and synthetic SAEs, which were not logged and are recomputed from their checkpoints. The recomputation gives FVE about 0.04 above the logged metric (0.63 vs.\ 0.59 on VideoMAE finetuned), so those two FVE values are marked approximate. Alive on SSv2 counts features that fire at least once on SSv2 validation. The synthetic SAE has no SSv2 count because it is trained on synthetic control activations.}
\label{tab:alive_on_ssv2}
\end{table}

\section{Valid features per dictionary}
\label{app:live}

Table~\ref{tab:live} reports how many features are valid, meaning a finite $t$ from a non-degenerate one-sample t-test, for the raw layer and the SAE dictionary of each model. Raw dictionaries are dense, so every dimension is active on the stimulus and all $D$ are valid. The SAE dictionaries are sparse and trained on natural SSv2 video, so most features never fire on the out-of-distribution ball stimulus and only a small fraction are valid. This valid set is the population the test actually evaluates, while the Bonferroni denominator stays the full dictionary size $D$. Note however, that the vast majority of the SAE features for all models are alive (fires at least once) on held-out SSv2 (Table~\ref{tab:alive_on_ssv2}). Thus in the interventions section Appendix~\ref{app:interventionanalysis} it is not unexpected to see span erasure having small effects, because the erasure is an erasure of a small number of features within a dictionary of many thousands of active features. 

We consider that a non-valid feature on the stimulus set has two possibilities. $C(V)=0$ for all videos is one, so the feature is dead on the stimulus, and shouldn't be called static or temporal. $C(V)=c\neq0$ for all videos is the other, which probably signals some kind of artifact where the NFP test has gone wrong, either regarding the stimulus or $N$ being too small or something else. So in practice, non-valid features shouldn't encode ball-motion related temporal concepts, although they may encode temporal concepts not represented by the stimulus set, e.g. specifically a hand moving left, or increasing brightness. 

In the code, we do not add an additional filter to remove near-dead features from the NFP test calculations on top of non-valid features. This means that under the right conditions the NFP test can be quite well behaved, as seen in Section~\ref{sec:synth_main} where Table~\ref{tab:projfrac} seems to show reasonable $t$-scores and not any overly large $t$-scores that we'd see from a misbehaving NFP test due to near dead features.

\begin{table}[h]
\centering
\begin{tabular}{lrr}
\toprule
Model & Axes (valid / $D$) & SAE (valid / $D$) \\
\midrule
VideoMAE finetuned & 768 / 768 & 637 / 6144 \\
VideoMAE & 768 / 768 & 571 / 6144 \\
TimeSformer L11 & 768 / 768 & 93 / 6144 \\
V-JEPA2 & 1024 / 1024 & 2783 / 8192 \\
DINOv2 & 768 / 768 & 597 / 6144 \\
\bottomrule
\end{tabular}
\caption{Valid features (finite $t$ in the one-sample t-test) for the axes and SAE dictionaries of each model. Every axes dimension is valid. The sparse SAE dictionaries have most features non-valid on ball stimulus, which is expected since the SAEs were trained on SSv2 and the synthetic ball stimulus is both out of distribution and involves extremely simple videos. SAE counts are on the decorrelated stimulus.}
\label{tab:live}
\end{table}

\section{Identifying temporal features: results across layers}
\label{app:main_layers}

\begin{table}[H]
\centering
\small
\begin{tabular}{lllrrrrrrr}
\toprule
Model & $D$ & Layer & speed & $v_x$ & $v_y$ & $|a|$ & direction & any & \% \\
\midrule
\multirow{3}{*}{VideoMAE finetuned} & \multirow{3}{*}{768} 
  & 2 & 1 & 2 & 21 & 499 & 15 & 507 & 66.0 \\
 & & 5 & 258 & 244 & 144 & 522 & 52 & 672 & 87.5 \\
 & & 11 (final) & 352 & 359 & 151 & 496 & 89 & 707 & 92.1 \\
\midrule
\multirow{3}{*}{VideoMAE} & \multirow{3}{*}{768} 
  & 2 & 3 & 3 & 27 & 506 & 27 & 521 & 67.8 \\
 & & 5 & 8 & 24 & 33 & 559 & 32 & 576 & 75.0 \\
 & & 11 (final) & 6 & 23 & 58 & 515 & 29 & 540 & 70.3 \\
\midrule
\multirow{3}{*}{V-JEPA2} & \multirow{3}{*}{1024} 
  & 5 & 214 & 40 & 29 & 351 & 2 & 525 & 51.3 \\
 & & 11 & 327 & 277 & 124 & 616 & 21 & 856 & 83.6 \\
 & & 23 (final) & 45 & 2 & 4 & 254 & 0 & 294 & 28.7 \\
\midrule
\multirow{3}{*}{TimeSformer} & \multirow{3}{*}{768} 
  & 2 & 4 & 0 & 87 & 438 & 0 & 440 & 57.3 \\
 & & 5 & 15 & 7 & 41 & 351 & 1 & 361 & 47.0 \\
 & & 11 (final) & 23 & 20 & 63 & 408 & 4 & 426 & 55.5 \\
\midrule
DINOv2 & 768 & final & 0 & 0 & 0 & 0 & 0 & 0 & 0.0 \\
\bottomrule
\end{tabular}
\caption{Representation space dimension axes flagged by the NFP test at a given layer at the Bonferroni threshold $p < 0.05/D$. $D$ means the number of directions $w$ tested, which here is equivalent to the number of dimensions. Per-profile columns count axes significant for that $\tau_k$, and an axis can be significant for several. ``any'' counts dimensions significant for at least one profile.}
\label{tab:axes_across_layers}
\end{table}

Comparing between different layers of the same model, we see that VideoMAE finetuned develops temporal axes features in the early and middle layers and the counts of these temporal axes features persist till the final layer. However, for V-JEPA2, the count decreases significantly at the final layer. We speculate this might mean that V-JEPA2 develops a more structured representation of temporal concepts in its later layers, which VideoMAE finetuned never does. At the very least, these results are highly suggestive that the two models learn temporal information differently.

\section{Synthetic control: construction and supplementary results}
\label{app:synth_details}

\subsection{Full construction details}
\label{app:synth_construction}

The synthetic representation $h(V,t) \in \mathbb{R}^{768}$ is  constructed to match the dimensionality of VideoMAE and TimeSformer. The three projection matrices $W_\tau \in \mathbb{R}^{5\times768}$, $W_\sigma \in \mathbb{R}^{100\times768}$, and $W_p \in \mathbb{R}^{50\times768}$ are obtained by drawing a $768 \times 155$ matrix with i.i.d.\ $\mathcal{N}(0,1)$ entries, and computing its QR decomposition. This guarantees mutual orthonormality: every row is a  unit vector and all 155 rows are mutually orthogonal.

\textbf{Temporal component.} 
$\tau_{\mathrm{norm}}(V,t) \in \mathbb{R}^5$ stacks the five  temporal concept profiles (speed, $v_x$, $v_y$, $|a|$, direction), each z-scored to mean zero and unit variance using the global statistics across all $N = 3000$ videos and $T = 8$ time steps. Z-scoring ensures that each concept contributes equal variance to the temporal subspace, so that a random direction in $\operatorname{span}(W_\tau)$ samples all five concepts comparably. The profiles are computed from the same ball trajectories used in the stimulus set.

\textbf{Static component.}
$c(V) \in \mathbb{R}^{100}$ with each entry drawn $\sim\mathcal{N}(0,1)$ once per video and held constant across all $T$ steps. This represents per-clip content (object identity, background) constant across $t$. 

\textbf{Position-dependent component.}
Each of the 50 channels $m$ produces a scalar $g_m(V, p) = s_m(V) \cdot \exp\!\bigl(-\|p - \mu_m(V)\|^2 / 2\omega^2\bigr)$, where $s_m(V) \sim \mathrm{Uniform}\{+1,-1\}$ is a random sign, $\mu_m(V)$ is a random center, and both are drawn independently for each channel~$m$ and each video~$V$. The bump is evaluated at the ball's position $p = \mathrm{pos}(V,t)$ at each time step. The random-sign mechanism ensures that the expected within-video covariance with any temporal concept is exactly zero (see \S\ref{app:pos_sign}).

\textbf{Noise.}
$\varepsilon(V,t) \sim \mathcal{N}(0, 0.05^2\, I_{768})$ is i.i.d.\ across videos, time steps, and coordinates.  Without noise, $h(V,t)$ would lie exactly in a 155-dimensional subspace of $\mathbb{R}^{768}$.  An SAE trained on such exactly low-rank input  could achieve perfect reconstruction using only 155 decoder  directions, removing the sparsity pressure, making the SAE validation degenerate. The noise fills all 768 dimensions, forcing the SAE to learn a genuine sparse decomposition.  The scale $0.05$ is small relative to the planted components and is not tuned.

\subsection{Random-sign mechanism for position-dependent content}
\label{app:pos_sign}

The position-dependent component is designed so that each channel's expected within-video covariance with any temporal concept $\tau_k$ is exactly zero.

Fix a video $V$, a position channel $m$, and a temporal concept $k$.  The within-video covariance is: $$\operatorname{Cov}_t\!\bigl(g_m(V,\cdot),\; \tau_k(V,\cdot)\bigr) = \frac{1}{T}\sum_{t=1}^{T}\bigl(g_m(V,p_t) - \bar{g}_m\bigr) \bigl(\tau_k(V,t) - \bar{\tau}_k\bigr)$$
where $p_t = \mathrm{pos}(V,t)$. Since $g_m(V, p) = s_m(V) \cdot \exp(-\|p - \mu_m(V)\|^2/2\omega^2)$, the sign $s_m(V)$ factors out of every term:
$$\operatorname{Cov}_t(g_m, \tau_k) = s_m(V) \cdot \operatorname{Cov}_t\!\bigl(\exp(-\|p_\cdot - \mu_m\|^2/2\omega^2),\; \tau_k\bigr)$$ 
The second factor is a deterministic function of the trajectory and the bump center $\mu_m(V)$; call it $A(V, m)$.  Because $s_m(V) \sim \mathrm{Uniform}\{+1,-1\}$ independently of the trajectory and center:
$$\mathbb{E}_{s_m}\!\bigl[\operatorname{Cov}_t(g_m, \tau_k)\bigr] = \mathbb{E}_{s_m}[s_m] \cdot A(V,m) = 0 \cdot A(V,m) = 0$$
This holds for every video individually, regardless of the trajectory, the starting position, or the bump center.  Averaging over videos, $\mathbb{E}_V[\operatorname{Cov}_t(g_m, \tau_k)] = 0$, so the $t$-test's null hypothesis is satisfied.

Note that without the NFP theorem's compact-support and 
coverage-condition machinery (which applies to real model 
activations but not to this synthetic control, where no off-screen 
zeroing is enforced), such a fixed function $g$ would produce a 
nonzero mean covariance across videos. The random-sign 
construction sidesteps this by ensuring cancellation at the level 
of individual videos rather than relying on averaging over starting 
positions.

\subsection{Full $\beta^2$ sweep results}
\label{app:projfrac_full}

Table~\ref{tab:projfrac_full} reports the complete 26-level sweep. The main text (Table~\ref{tab:projfrac}) shows a representative subset. At $\beta^2 = 0$, we test 200 directions: 100 random unit vectors in the full space, 50 lying entirely in $\operatorname{span}(W_p)$, and 50 lying entirely in $\operatorname{span}(W_\sigma)$. In total, 850 directions are tested. We run the NFP test with Bonferroni correction at $\alpha/850$, giving a threshold of $|t| > 4.0$.

\begin{table}[h]
\centering
\small
\begin{tabular}{lccrrrrr}
\toprule
& & & \multicolumn{5}{c}{$\max_k |t|$, single-concept directions} \\
\cmidrule(lr){4-8}
$\beta^2$ & Flagged & Median $\max_k |t|$ & speed & $v_x$ & $v_y$ & $|a|$ & direction \\
\midrule
0 (random) & 0/100 & 1.5 & -- & -- & -- & -- & -- \\
0 (position only) & 0/50 & 1.5 & -- & -- & -- & -- & -- \\
0 (constant only) & 0/50 & 1.4 & -- & -- & -- & -- & -- \\
\midrule
$10^{-5}$ & 2/20 & 3.3 & 7.2 & 6.3 & 4.5 & 2.6 & 3.5 \\
$1.6\times10^{-5}$ & 6/20 & 3.7 & 7.6 & 7.8 & 1.5 & 1.2 & 5.2 \\
$2.5\times10^{-5}$ & 17/20 & 5.3 & 6.7 & 9.7 & 7.0 & 3.8 & 3.9 \\
$4.0\times10^{-5}$ & 20/20 & 6.0 & 10.1 & 10.0 & 7.0 & 5.1 & 6.8 \\
$6.3\times10^{-5}$ & 20/20 & 7.6 & 12.8 & 11.8 & 9.9 & 5.3 & 8.6 \\
$10^{-4}$ & 20/20 & 9.6 & 16.4 & 14.2 & 8.0 & 7.6 & 8.6 \\
$1.6\times10^{-4}$ & 20/20 & 11.4 & 19.5 & 17.5 & 10.8 & 8.1 & 11.5 \\
$2.5\times10^{-4}$ & 20/20 & 13.7 & 22.1 & 17.6 & 14.8 & 8.7 & 14.7 \\
$4.0\times10^{-4}$ & 20/20 & 14.9 & 25.9 & 19.0 & 19.4 & 10.4 & 14.7 \\
$6.3\times10^{-4}$ & 20/20 & 18.4 & 29.1 & 21.4 & 23.1 & 10.9 & 16.9 \\
$10^{-3}$ & 20/20 & 20.6 & 31.7 & 22.8 & 26.1 & 11.3 & 18.5 \\
$1.6\times10^{-3}$ & 20/20 & 22.1 & 33.3 & 22.9 & 28.2 & 12.0 & 19.2 \\
$2.5\times10^{-3}$ & 20/20 & 23.8 & 35.3 & 23.1 & 33.9 & 12.2 & 20.7 \\
$4.0\times10^{-3}$ & 20/20 & 26.3 & 37.1 & 23.5 & 36.8 & 12.5 & 21.2 \\
$6.3\times10^{-3}$ & 20/20 & 25.2 & 37.6 & 23.7 & 38.8 & 12.6 & 21.5 \\
$10^{-2}$ & 20/20 & 28.1 & 38.4 & 23.7 & 40.4 & 12.6 & 21.7 \\
$1.6\times10^{-2}$ & 20/20 & 30.2 & 39.0 & 23.8 & 42.4 & 12.5 & 21.9 \\
$2.5\times10^{-2}$ & 20/20 & 31.8 & 38.9 & 23.9 & 42.3 & 12.6 & 21.8 \\
$4.0\times10^{-2}$ & 20/20 & 30.1 & 39.4 & 23.9 & 43.1 & 12.7 & 22.0 \\
$6.3\times10^{-2}$ & 20/20 & 33.2 & 39.4 & 24.0 & 43.4 & 12.7 & 21.9 \\
$10^{-1}$ & 20/20 & 31.7 & 39.5 & 24.0 & 43.6 & 12.6 & 22.0 \\
$1.6\times10^{-1}$ & 20/20 & 33.6 & 39.5 & 24.0 & 43.9 & 12.7 & 22.0 \\
$2.5\times10^{-1}$ & 20/20 & 28.7 & 39.4 & 24.1 & 43.9 & 12.7 & 22.1 \\
$4.0\times10^{-1}$ & 20/20 & 29.5 & 39.5 & 24.0 & 44.1 & 12.7 & 22.0 \\
$6.3\times10^{-1}$ & 20/20 & 33.4 & 39.5 & 24.0 & 44.1 & 12.7 & 22.1 \\
$10^{0}$ & 20/20 & 31.3 & 39.5 & 24.0 & 44.1 & 12.7 & 22.0 \\
\bottomrule
\end{tabular}
\caption{Full projection-fraction sweep across all 26 log-spaced $\beta^2$ levels plus the three $\beta^2=0$ null sets. This is the complete version of Table~\ref{tab:projfrac}, which lists a representative subset. ``Flagged'' counts the random directions passing the Bonferroni threshold, and the last five columns give the $\max_k|t|$ of the single-concept directions.}
\label{tab:projfrac_full}
\end{table}

\subsection{Individual flagged SAE features}
\label{app:synth_features}

Table~\ref{tab:synth_features_app} lists the 8 SAE features 
flagged by the NFP test on the synthetic representation, with 
their ProjFrac and per-concept cosine similarities.

\begin{table}[h]
\centering
\small
\setlength{\tabcolsep}{4pt}
\begin{tabular}{llcccccc}
\toprule
& & & \multicolumn{5}{c}{$|\cos|$ with planted concept direction} \\
\cmidrule(lr){4-8}
Feature & Flagged for & ProjFrac $W_\tau$ 
        & speed & $v_x$ & $v_y$ & $|a|$ & direction \\
\midrule
feat00601 & speed, $v_x$, $v_y$ & 0.714 
          & 0.026 & 0.292 & \textbf{0.763} & 0.203 & 0.069 \\
feat03882 & speed, $v_x$, $v_y$ & 0.659 
          & 0.080 & 0.290 & \textbf{0.731} & 0.138 & 0.120 \\
feat00627 & speed, $v_x$, $v_y$ & 0.648 
          & 0.028 & 0.477 & \textbf{0.620} & 0.157 & 0.103 \\
feat03197 & all five & 0.608 
          & 0.121 & \textbf{0.610} & 0.174 & 0.027 & 0.437 \\
feat01793 & $v_x$, $v_y$, direction & 0.602 
          & 0.102 & \textbf{0.581} & 0.156 & 0.066 & 0.474 \\
feat01248 & speed, $v_x$, $v_y$ & 0.566 
          & 0.159 & 0.460 & \textbf{0.565} & 0.092 & 0.016 \\
feat00987 & $v_x$ & 0.456 
          & 0.106 & \textbf{0.623} & 0.040 & 0.233 & 0.023 \\
feat05391 & speed, $|a|$ & 0.407 
          & 0.104 & 0.013 & 0.020 & \textbf{0.628} & 0.029 \\
\bottomrule
\end{tabular}
\caption{The 8 flagged synthetic-SAE features.  Every flagged 
feature places $0.41$--$0.71$ of its norm in the planted 5-d 
temporal subspace.  The absolute cosine with each planted concept 
direction is listed, with the dominant direction in bold.}
\label{tab:synth_features_app}
\end{table}

All 8 features align strongly with one or more single planted concept 
directions (maximum absolute cosine $0.57$--$0.76$), but none reach $1$, which is expected due to SAE reconstruction being non-exact. Each feature seems to mix a dominant concept with smaller components of other planted directions and of non-temporal contents. This is a known property of SAE feature learning.

\section{Theory behind $\beta^2$ sweep results}
\label{app:saturation}

We try to make mathematically clear in this Appendix how the $t$ statistics in Section~\ref{sec:synth_main} are derived, what intuitively the saturation of the five single-concept curves in Table~\ref{tab:projfrac} mean, and why the saturation levels are determined by the stimulus set rather than by any property of the test.

\subsection{General case}

Fix a unit test direction $d = \beta\, u_\tau + \sqrt{1-\beta^2}\, u_\perp$ and a temporal concept profile $\tau_{k}$. We want to understand the NFP test's output: the $t$-statistic of the within-video covariances $\{C^{(k)}(V_1), \ldots, C^{(k)}(V_N)\}$.

Step 1: Activation. The activation of direction $d$ on the synthetic representation is:
$$\psi(V,t) = \langle h(V,t),\, d \rangle$$
Because the four components of $h$ (temporal, static, position-dependent, noise) live in mutually orthogonal subspaces, and $d$ decomposes into $u_\tau$ (in the temporal subspace) and $u_\perp$ (in its complement), the activation separates as:
$$\psi(V,t) = \beta \underbrace{\langle h(V,t),\, u_\tau \rangle}_{\text{temporal part}} + \sqrt{1-\beta^2} \underbrace{\langle h(V,t),\, u_\perp \rangle}_{\text{non-temporal part}}$$
Step 2: Within-video covariance. The covariance with concept $k$ is:
$$C^{(k)}(V) = \operatorname{Cov}_t\!\bigl(\psi(V,\cdot),\; \tau_{k}(V,\cdot)\bigr)$$
By linearity of covariance:
$$C^{(k)}(V) = \beta \cdot \operatorname{Cov}_t\!\bigl(\langle h, u_\tau \rangle,\; \tau_{k}\bigr) + \sqrt{1-\beta^2} \cdot \operatorname{Cov}_t\!\bigl(\langle h, u_\perp \rangle,\; \tau_{k}\bigr)$$

Step 3: Expected value across videos. The temporal part has a nonzero expected covariance (it carries genuine temporal signal), while the non-temporal part has zero expected covariance: the static component is constant within each video (zero within-video covariance with anything), the position-dependent component has zero expected covariance by the random-sign mechanism (Appendix~\ref{app:pos_sign}), and the noise is independent. Therefore:
$$\mathbb{E}_V\bigl[C^{(k)}(V)\bigr] = \beta \cdot \mathbb{E}_V\!\bigl[\operatorname{Cov}_t(\langle h, u_\tau \rangle, \tau_{k})\bigr]$$
This scales linearly with $\beta$.

Step 4: The $t$-statistic. The one-sample $t$-statistic under the null hypothesis $\mu_0 = 0$ is:
$$t^{(k)} = \frac{\bar{C}^{(k)}}{\operatorname{sd}(C^{(k)})/\sqrt{N}}$$
where $\bar{C}^{(k)}$ and $\operatorname{sd}(C^{(k)})$ are the sample mean and standard deviation of $C^{(k)}(V)$ over $N$ videos. The numerator scales as $\beta$. The denominator depends on the variance of $C^{(k)}(V)$ across videos, which has contributions from both the temporal part (variance $\propto \beta^2$) and the non-temporal part (variance $\propto 1 - \beta^2$). 

For a random $u_\tau$ in the 5-d temporal subspace, the expected covariance involves the within-video cross-covariances between the five planted concepts and the test concept $k$, weighted by the random loadings of $u_\tau$. Because these loadings vary across random draws, so does the test response, which is why we write the ``\emph{Median} $\max_k|t|$'' column of Table~\ref{tab:projfrac}. For closed-form expressions, we now specialize to the single-concept case.

\subsection{Single-concept case}

Set $u_\tau = [W_\tau]_k$, the $k$-th row of the temporal projection matrix, so that all temporal content concentrates on concept $k$. Recall, $$h(V,t) = \sum_{j=1}^{5} \tau_{j,\text{norm}}(V,t) [W_\tau]_j + \sum_{l=1}^{100} c_l(V) [W_\sigma]_l + \sum_{m=1}^{50} g_m(V,\text{pos})[W_p]_m + \varepsilon(V,t)$$ 
So taking the inner product with $[W_\tau]_k$, the temporal part of the activation becomes:
$$\beta \langle h(V,t),\, [W_\tau]_k \rangle = \beta\, \tau_{k,\text{norm}}(V,t) + \beta \langle \varepsilon(V,t),\, [W_\tau]_k \rangle$$
where $\tau_{k,\text{norm}} = (\tau_k - \mu_k)/\sigma_k$ is the globally z-scored concept profile, and all other components vanish by orthonormality. The noise term $\langle \varepsilon, [W_\tau]_k \rangle \sim \mathcal{N}(0, 0.05^2)$ is small. We drop it for clarity:
$$\psi(V,t) \approx \frac{\beta}{\sigma_k}\bigl(\tau_k(V,t) - \mu_k\bigr) + \sqrt{1-\beta^2} \langle h(V,t),\, u_\perp \rangle$$
Then since the global mean $\mu_k$ cancels in the within-video covariance (it shifts both $\psi$ and $\bar\psi$ equally), we get within-video covariance for concept $k$:
$$\operatorname{Cov}_t\!\bigl(\psi, \tau_k\bigr) = \frac{\beta}{\sigma_k} \operatorname{Cov}_t(\tau_k, \tau_k) + \sqrt{1-\beta^2} \operatorname{Cov}_t\!\bigl(\langle h, u_\perp \rangle, \tau_k\bigr)$$
$$= \frac{\beta}{\sigma_k} \operatorname{Var}_t(\tau_k(V,\cdot)) + \text{zero-mean term}$$
Then the sample mean across $N$ is:
$$\bar{C}^{(k)} \approx \frac{\beta}{\sigma_k} \overline{\operatorname{Var}_t(\tau_k)}$$
Then to derive the standard deviation across videos $sd(C^{(k)})$, consider that generally $\mathrm{Var}(A+B)=\mathrm{Var}(A)+\mathrm{Var}(B)+2\mathrm{Cov}(A,B)$, so if the last term is 0 we have:
$$\operatorname{sd}(C^{(k)}) \approx \sqrt{\frac{\beta^2}{\sigma_k^2}\, \sigma_{\text{signal}}^2 + (1-\beta^2)\, \sigma_{\text{non-temporal}}^2}$$
where $\sigma_{\text{signal}}^2 = \operatorname{Var}_V[\operatorname{Var}_t(\tau_k)]$ is the across-video variance of the within-video variance, and $\sigma_{\text{non-temporal}}^2$ is the variance contribution from the non-temporal component from $u_\perp$. To see why the last term is 0, we let $\eta(V)=\mathrm{Cov}_t\bigl(\langle h, u_\perp \rangle, \tau_k\bigr)$, then because by construction, $\mathbb{E}[\eta\mid\pi]=0$, where $\pi$ is the velocity profile, we get:
$$\mathbb{E}_V[\mathrm{Var}_t(\tau_k)\cdot\eta(V)]=\mathbb{E}_\pi[\mathrm{Var}_t(\tau_k)\cdot\mathbb{E}[\eta|\pi]]=0$$
$$\mathrm{Cov}_V(\mathrm{Var}_t(\tau_k), \eta(V))=\mathbb{E}_V[\mathrm{Var}_t(\tau_k)\cdot\eta(V)]-\mathbb{E}_V[\mathrm{Var}_t(\tau_k)]\cdot\mathbb{E}_V[\eta(V)]=0$$
So the cross term vanishes exactly.

So the $t$-statistic becomes:
$$|t^{(k)}| \approx \sqrt{N} \cdot \frac{(\beta/\sigma_k)\, \overline{\operatorname{Var}_t(\tau_k)}}{\sqrt{(\beta^2/\sigma_k^2)\, \sigma_{\text{signal}}^2 + (1-\beta^2)\, \sigma_{\text{non-temporal}}^2}}$$

Now we can understand why $|t|$ grows with $\beta$ and then saturates. When $\beta$ is small, the denominator is approximately $\sigma_{\text{non-temporal}}$, independent of $\beta$, so the $t$-statistic grows linearly with $\beta$:
$$|t^{(k)}| \approx \sqrt{N} \cdot \frac{\beta\, \overline{\operatorname{Var}_t(\tau_k)}}{\sigma_k\, \sigma_{\text{non-temporal}}} \;\propto\; \beta$$

When $\beta$ is large ($\beta \to 1$), $\sigma_k$ and $\beta$ cancel from numerator and denominator, then:
$$|t^{(k)}| \;\to\; t_{\text{sat}}^{(k)} = \sqrt{N} \cdot \frac{\overline{\operatorname{Var}_t(\tau_k)}}{\operatorname{sd}_V[\operatorname{Var}_t(\tau_k)]}$$
This is a constant determined entirely by the stimulus set, and serves as a sort of upper bound. This analysis roughly explains how the $t$-scores behave in Table~\ref{tab:projfrac} as $\beta^2$ increases. 

\subsection{Implications for interpreting real-model results}

The saturation value $t_{\text{sat}}^{(k)}$ expression is a fraction that suggests in this synthetic setup, the NFP test takes the within-video variance as signal, and across-video variance reduces the $t$ score. We can consider this as a hypothetical rule of thumb to understand statistically how the NFP test sensitivity and feature flagging works. From Table~\ref{tab:projfrac} we can also see that it is not unlikely this rule extends to other values of $\beta$ because acceleration $|a|$ consistently has a lower single concept direction $t$-score across all values of $\beta>0$. If we extrapolate this rule of thumb to the NFP test on actual models, then we can say that the NFP test in Section~\ref{sec:nfp} was conducted on a stimulus set that actually discouraged the flagging of features under acceleration $|a|$, and so the large concentration of features flagged under acceleration is likely not a test artifact, rather, video models genuinely produce features that covary more significantly with ground truth acceleration $|a|$ for some reason. One possible explanation is that video models dedicate more of their representation to acceleration-related content than to other motion concepts due to yet unknown explanations involving mechanisms from model training. Overall, this synthetic control sweep over $\beta$ provides evidence that validates the main results in Section~\ref{sec:nfp}, despite these main results being admittedly unexpected.

\section{Synthetic control with non-orthonormal temporal directions}
\label{app:synth_oblique}

The main synthetic control plants five mutually orthonormal temporal directions.  Real concept directions need not be orthogonal: speed and $v_x$ may share variance in a real model's representation.  To test robustness, we repeat the control with non-orthogonal temporal encoding directions.

We modify the temporal projection matrix by adding a shared component: $[W_\tau]_i \gets \sqrt{1-c}\,[W_\tau]_i + \sqrt{c}\,q$, where $q$ is a fixed random unit vector orthogonal to the temporal subspace. This gives every pair of temporal directions pairwise cosine similarity exactly $c$ while preserving unit norms. We test $c = 0.25$ and $0.5$, train a new SAE on each modified representation, and run the NFP test unchanged.

\begin{table}[h]
\centering
\small
\begin{tabular}{lccccc}
\toprule
Geometry & Flagged & PF (flagged) & PF (unflagged) 
         & MaxCos range & PF $< 0.05$ \\
\midrule
orthonormal ($c=0$) & 8 & 0.582 & 0.003 & 0.57--0.76 & 0 \\
pairwise $c=0.25$   & 17 & 0.351 & 0.004 & 0.33--0.59 & 0 \\
pairwise $c=0.5$    & 20 & 0.237 & 0.005 & 0.28--0.55 & 0 \\
\bottomrule
\end{tabular}
\caption{NFP results under correlated temporal directions. PF is the mean ProjFrac of flagged/unflagged features, computed against an orthonormal basis of the temporal subspace's span. The last column counts flagged features with $\mathrm{ProjFrac} < 0.05$ (false positives).}
\label{tab:synth_oblique}
\end{table}

This correlated setup flags more SAE features (17 at $c = 0.25$, 20 at $c = 0.5$). Every flagged feature at every $c$ has substantial ProjFrac into the temporal subspace, and no false positive appears (last column). Individual features align less cleanly with single planted directions as $c$ grows, which is expected. When planted directions overlap, a single SAE feature can reconstruct several concepts at once.

\section{Intervention Analysis additional details and experiments}
\label{app:interventionanalysis}

In this Appendix we provide a further set of experiments to substantiate our claim that we are unable to characterize temporal features in video models as more important for temporally demanding tasks compared to non-temporal features. All experiments are done using the SSv2 dataset, and involve variants of SSv2 classification tasks.

The three models we used all contain slightly different setups for the classification heads we use. The finetuned VideoMAE model comes with a head from the finetuning procedure. The V-JEPA2 model comes with an attentive probe trained on top of the encoder for the 174 SSv2 classes. The TimeSformer model only uses the CLS token from its last layer in downstream classification. The CLS token is transformed through multiplication with a matrix to get a 174-dimension vector which represents the classification readouts. As a result when we intervene on TimeSformer, we need to intervene on layer 10, one layer before the CLS token, and not layer 11, where the tokens besides the CLS token never affect its downstream performance. 

Feature amplification and flipper experiments in this Appendix re-add the SAE reconstruction error, so a pass through the intervention pipeline with no perturbation returns the original model's prediction exactly. The baseline rows of every table in this section are therefore also the no-perturbation result of the SAE pipeline. We do this because SAE reconstruction performance differs across models, re-adding the error term keeps the intervention comparisons free of this reconstruction loss. Random controls are selected from \emph{all} non-flagged features with 3 seeds. This includes valid non-flagged features, and non-valid features (Appendix~\ref{app:live}). Low Static controls are drawn from features whose covariance with every temporal concept gives $|t| < 2$ (compared to the temporal threshold of $|t|>4+$). We note that splitting into the following categories may be more suitable here for analysis on some sections: \{temporal, valid \& non-flagged i.e. static (split into intermediate and low static if helpful), non-valid\}.

While non-valid features in practice shouldn't encode ball-motion related temporal concepts, they may encode temporal concepts not represented by the stimulus set, e.g. specifically a hand moving left, or increasing brightness. Likewise, valid non-flagged features, including low-static features, are not necessarily static across all stimuli for the same reason. That being said, because SSv2 is motion-heavy (even as ball-related motion may not fully encapsulate motion concepts in SSv2), we believe it is not unreasonable for non-valid features and valid non-flagged features to be taken (with caution) as static or near-static across all relevant stimuli for the sake of analysis.

\subsection{Feature amplification}

We first perform amplification on the SAE feature activation value $f_i \to \alpha f_i$ with $\alpha > 1$ for all features in the temporal/low static/random set. We evaluate each model on all SSv2 Classes, using 10 random videos for each class for a total of $n=1740$ videos.

\begin{table}[h]
\centering
\small
\begin{tabular}{@{}lccccc@{}}
\toprule
Model & Feature Count & Baseline & Identified & Low Static & Random\\
\midrule
VideoMAE finetuned & 109 & 0.629 & 0.634 & 0.631 & 0.627\\
V-JEPA2            &  62 & 0.681 & 0.678 & 0.680 & 0.675\\
TimeSformer L10 & 16 & 0.495 & 0.494 & 0.489 & 0.495\\
\bottomrule
\end{tabular}
\caption{Performance at amplification $\alpha = 3$. Columns show all-class accuracy. TimeSformer SAE features are drawn from layer 10, not layer 11.}
\label{tab:repair}
\end{table}

Although we only report $\alpha=3$ here, these results are representative of the changes observed for $\alpha \in (2, 10]$, with any improvement observed usually peaking between $\alpha=3$ and $\alpha=5$ and dropping after that. Here we again see that amplification behavior differs across models. VideoMAE finetuned produces what we'd most expect from amplifying temporal features: a modest performance increase over both the baseline and amplifying non-temporal features. The opposite is true for V-JEPA2, with amplifications harming performance, and for TimeSformer amplifications seem to have no effect. We acknowledge that these changes compared to the baseline are likely within noise, and suggest taking the results here in conjunction with results in other parts of the Appendix.

\subsection{Span erasure effect on binary classification}

We consider that if temporal features truly did matter for temporally demanding tasks more so than non-temporal features, then conducting span erasure of temporal features should more negatively impact performance on binary classification of temporally opposing pairs like pushing left-to-right vs right-to-left. Such a test is one of the most simple tests and a negative result on this test would be highly suggestive that temporal features don't matter as much as we think for SSv2 classification. In general here, we follow the original span erasure procedure in Section~\ref{sec:steer}. For a pair (A, B), we take the pre-softmax logits, keep only the logits for A and B, softmax over just those two, and predict the larger. Binary accuracy is the two-way correct classification rate over 10 validation clips per class (20 per pair).

In particular, we investigate the 35 kinematic pairs we could identify in SSv2. They are distinguished by a single kinematic variable such as:

1. Direction: e.g. pushing left-to-right vs right-to-left, camera up vs down.\\
2. Speed: e.g. Falling like a feather vs a rock.\\
3. Motion outcome: e.g. spinning continues vs stops, pushing almost falls off vs falls off, lifting without a drop vs then dropping.

We believe that kinematic pairs are most likely to exhibit the behavior we expect because the SAE features identified were identified using a ball-motion stimulus set, meaning that these flagged SAE features are encoding some motion related concepts, and not necessarily other kinds of temporal concepts. We let $\Delta=\text{erased} - \text{baseline}$, and define $\Delta_{temporal}$ as the change from erasing the span of identified features, and $\Delta_{rnd}$ as the mean change over the three random seeds.

\begin{table}[t]
\centering
\small
\begin{tabular}{@{}lccccc@{}}
\toprule
Model & Feature Count & Baseline & $\Delta_{\mathrm{temporal}}$ & $\Delta_{\mathrm{rnd}}$ & Excess $=\Delta_{\mathrm{temporal}}-\Delta_{\mathrm{rnd}}$ \\
\midrule
VideoMAE finetuned      & 109 & 0.954 & $-0.021$ & $-0.009$ & $-0.012$ \\
V-JEPA2          & 62 & 0.951 & $-0.017$ & $-0.022$ & $+0.005$ \\
TimeSformer L10  & 16 & 0.914 & $-0.011$ & $-0.001$ & $-0.010$ \\
\bottomrule
\end{tabular}
\caption{Binary classification performance under span erasure.}
\label{tab:binary_classification_span_erasure}
\end{table}

The results suggest that erasing neither the span of the temporal features nor that of an equivalent count of non-temporal features affects binary classification performance meaningfully. The Excess $=\Delta_{\mathrm{temporal}}-\Delta_{\mathrm{rnd}}$ is also non-significant compared to the baseline performance, and this seems to suggest that not only are the motion-related temporal SAE features not important to classify these kinematic pairs, they are also just as non-important as an equivalent number of the rest of the non-temporal features. We believe this is considerable evidence that SSv2 classification does not actually require temporal feature encoding in representation space, and that video models like VideoMAE finetuned and V-JEPA2 indeed don't specially utilize temporal features encoded in representation space for these supposedly temporally demanding tasks. This doesn't mean that SSv2 classification is not temporally demanding, as we discuss in Section~\ref{app:interventionconclusion}.

\subsection{Flipper experiments}

Now a question naturally presents itself, if temporal features aren't important for temporally demanding tasks, are there cases where temporal features matter at all? We show here examples of temporal features that flip the classification result between opposing pairs. We clamp each feature individually and set that feature's activation to the constant $s = \pm 150$ on all tokens, then decode and re-add the reconstruction error before reading the classifier. The magnitude $150$ is large relative to the features' typical activations. Each feature is screened on 10 SSv2 class pairs (Table~\ref{tab:pairs}) hand-picked using a motion-based heuristic from the $\tau_k$ represented in the original stimulus set, and 10 random class pairs (Table~\ref{tab:pairs_rnd}).

We count a video as \emph{flipped} when, under the clamp, the model's top-1 prediction over all 174 classes becomes the paired class. Each pair uses 12 videos per class (24 per pair). The \emph{flip rate} of a feature on a pair is the flipped fraction of the pair's videos, taken as the largest rate over the two clamp signs ($\pm 150$) and the two choices of which class of the pair is the target, and a feature is a \emph{flipper} if its flip rate reaches $0.5$ on some pair. We also report a weaker rank-based rate, the fraction of videos where the paired class merely outranks the video's own class, even if a third class is predicted.

Tables~\ref{tab:flippers} and~\ref{tab:flippers_rnd} give a list of all features considered flippers, showing that some temporal features do indeed have cases where they could be useful for some downstream task, or to intervene on for some desired effect. That being said, Table~\ref{tab:flipper_counts} also shows that by sampling an equal number of low static features from the SAE dictionary we can also find low static features that are flippers, and once again different models behave differently, matching what we observed in all the other intervention experiments. Thus, the existence of temporal flippers doesn't contradict the rest of the intervention results.

\begin{table}[h]
\centering
\small
\begin{tabular}{llllcc}
\toprule
Model & Feature & Tag & Pair & Flip rate (top-1) & Rank-based rate \\
\midrule
\multirow{4}{*}{VideoMAE finetuned}
 & feat00250 & $v_x$ & spin continue/stop & 0.67 & 1.00 \\
 & feat00214 & $v_x$ & camera left/right & 0.62 & 0.71 \\
 & feat02818 & speed & fall fast/slow & 0.50 & 0.62 \\
 & feat05077 & $v_x$ & fall fast/slow & 0.50 & 0.67 \\
\midrule
\multirow{3}{*}{V-JEPA2}
 & feat04854 & $|a|$ & spin continue/stop & 0.67 & 0.83 \\
 & feat03101 & $|a|$ & spin continue/stop & 0.58 & 0.92 \\
 & feat04594 & $|a|$ & spin continue/stop & 0.58 & 0.83 \\
\midrule
\multirow{3}{*}{TimeSformer L10}
 & feat02383 & $|a|$ & camera left/right & 0.50 & 1.00 \\
 & feat03534 & $|a|$ & camera left/right & 0.50 & 1.00 \\
 & feat05791 & $|a|$ & moving up/down & 0.50 & 1.00 \\
\bottomrule
\end{tabular}
\caption{All (feature, pair) entries with top-1 flip rate $\ge 0.5$ (flippers), per model, for the hand-picked opposing pairs.}
\label{tab:flippers}
\end{table}

\begin{table}[h]
\centering
\small
\begin{tabular}{llllcc}
\toprule
Model & Feature & Tag & Pair & Flip rate (top-1) & Rank-based rate \\
\midrule
\multirow{2}{*}{VideoMAE finetuned}
 & feat05622 & speed & tilt fall/stay & 0.67 & 1.00 \\
 & feat02951 & $v_x$ & poke move/still & 0.54 & 0.83 \\
\midrule
\multirow{2}{*}{V-JEPA2}
 & feat03759 & $|a|$ & roll down/up & 0.54 & 0.75 \\
 & feat04022 & $|a|$ & poke move/still & 0.50 & 0.83 \\
\midrule
TimeSformer L10 & - & - & - & - & -\\
\bottomrule
\end{tabular}
\caption{All (feature, pair) entries with top-1 flip rate $\ge 0.5$, per model, on the randomly drawn opposing pairs. TimeSformer's identified features produce no flipper (maximum $0.46$).}
\label{tab:flippers_rnd}
\end{table}

\begin{table}[h]
\centering
\small
\begin{tabular}{lcccc}
\toprule
& \multicolumn{2}{c}{Hand-picked pairs} & \multicolumn{2}{c}{Random pairs} \\
\cmidrule(lr){2-3}\cmidrule(lr){4-5}
Model & Identified & Low static & Identified & Low static \\
\midrule
VideoMAE finetuned & 4 & 0 & 2 & 0 \\
V-JEPA2 & 3 & 3 & 2 & 1 \\
TimeSformer & 3 & 1 & 0 & 3 \\
\bottomrule
\end{tabular}
\caption{Count of flippers (feature, pair entries with top-1 flip rate $\ge 0.5$) that are temporal and low static, on the hand-picked pairs and the randomly drawn pairs. Note that no flippers for both the hand-picked pairs and random pairs, temporal nor static, are for negative control pairs. This may suggest that features encoding negative control pair related concepts do not significantly encode concepts related to the other pairs tested.}
\label{tab:flipper_counts}
\end{table}

\subsection{What does all this mean?}
\label{app:interventionconclusion}

Overall, the results across all three experiments are consistent. Interventions on temporal SAE features can produce a greater effect on performance above a non-temporal baseline only for some models but not others. One way to interpret this is that SSv2 may not be a very temporally demanding dataset, and that it can be mostly solved using static features from single frames and the like. We believe this is unlikely, because SSv2 is well known for its temporal difficulty. What is more likely is that because any model needs a classification head or MLP, that static features interact in ways that produce temporal concepts in the classification head, so that even if the temporal features in representation space are not used, static features themselves composed over multiple frames like ``ball at position $x$ in frame $t$'' and ``ball at position $x-\delta$ in frame $t+1$'' together produce the temporal ``understanding'' the model needs to perform well.

That being said, this hypothesis itself raises two questions that we encourage future work on. First, is this classification head based mechanism really how many video models perform well on temporally demanding tasks? Second, why is it that video model pretraining fails to produce impactful temporal features? Is it because of the training objective or the pretraining dataset, or just the nature of real world data and interaction in general?

We acknowledge there are also less exciting explanations to these intervention results like SAE failure and NFP test false negatives (L2). Some of these could be resolved with more compute.

\subsection{Lists of class pairs used}
\label{app:classpairs}

\begin{table}[H]
\centering
\small
\begin{tabular}{llll}
\toprule
Key & Axis & Positive class & Negative class \\
\midrule
push\_lr & $v_x$ & Pushing [sth] from left to right & Pushing [sth] from right to left \\
pull\_lr & $v_x$ & Pulling [sth] from left to right & Pulling [sth] from right to left \\
cam\_lr & $v_x$ & Turning the camera left & Turning the camera right \\
move\_ud & $v_y$ & Moving [sth] up & Moving [sth] down \\
cam\_ud & $v_y$ & Turning the camera upwards & Turning the camera downwards \\
fall\_speed & speed & [Sth] falling like a rock & [Sth] falling like a feather or paper \\
spin\_stop & $|a|$ & Spinning [sth], continues & Spinning [sth], quickly stops \\
obj\_cam & depth & Moving [sth] towards the camera & Moving [sth] away from the camera \\
obj\_obj & depth & Moving [sth] closer to [sth] & Moving [sth] away from [sth] \\
cam\_appr & depth & Approaching [sth] with your camera & Moving away from [sth] with your camera \\
\bottomrule
\end{tabular}
\caption{The 10 screen pairs. ``Turning the camera \ldots'' classes abbreviate ``\ldots while filming [something]''. Depth pairs serve as internal negative controls to check whether temporal features flip non-motion concepts, which is possible due to feature polysemanticity.}
\label{tab:pairs}
\end{table}

\begin{table}[H]
\centering
\small
\begin{tabular}{llll}
\toprule
Key & Axis & Positive class & Negative class \\
\midrule
push\_falloff & $|a|$ & Pushing [sth] so it almost falls off & Pushing [sth] so it falls off the table \\
poke\_move & speed & Poking [sth] so it slightly moves & Poking [sth] so it almost doesn't move \\
roll\_updown & $v_x$ & Letting [sth] roll down a slanted surface & Letting [sth] roll up, so it rolls back \\
throw\_catch & $|a|$ & Throwing [sth] in the air and catching it & Throwing [sth] in the air, letting it fall \\
move\_apart & $v_x$ & Moving [sth] and [sth] closer & Moving [sth] and [sth] away \\
lift\_slide & $|a|$ & Lifting a surface, not enough to slide & Lifting a surface until [sth] slides down \\
tilt\_fall & $|a|$ & Tilting [sth] slightly so it doesn't fall & Tilting [sth] until it falls off \\
hold\_fb & control & Holding [sth] in front of [sth] & Holding [sth] behind [sth] \\
cam\_appr & control & Approaching [sth] with your camera & Moving away from [sth] with your camera \\
drop\_fb & control & Dropping [sth] in front of [sth] & Dropping [sth] behind [sth] \\
\bottomrule
\end{tabular}
\caption{The 10 randomly drawn screen pairs (draw seed 0, 91 draws). Class texts are abbreviated. Control pairs serve a similar purpose as the depth pairs in Table~\ref{tab:pairs}, as an internal control to check whether temporal features flip pairs that are non-motion related, which is possible due to feature polysemanticity. Note that the random draw coincidentally selected `cam\_appr', which also appears as a hand-picked depth control. This is a coincidence of the draw, not a deliberate repeat.}
\label{tab:pairs_rnd}
\end{table}

\section{The strongly entangled stimulus set}
\label{app:entangled}

To see why disentangling the NFP stimulus set is important, we return to the covariance formula. 
$$C_w^{(k)}(V) \;=\; \frac{1}{T}\sum_{t=1}^{T}\big(\psi_w(V,t) - \bar\psi_w(V)\big)\big(\tau_k(V,t) - \bar\tau_k(V)\big)$$
If the NFP stimulus set is entangled, meaning we have $\text{Cov}(\tau_k, \tau_{k'})\gg0$, then $C_w^{(k)}$ and $C_w^{(k')}$ behave similarly enough for most $w$ flagged as temporal for $\tau_k$ to also be flagged for $\tau_{k'}$. While this isn't technically a problem, any entangled results may give us a wrong impression and lead to incorrect insights. This raises the question of what a fair or standard entanglement would look like, because the flagging and counts will change with entanglement. The only obvious fair entanglement is to have no entanglement, which we achieve through decorrelated stimulus in Appendix~\ref{app:decorrelatestimulus}. 

To demonstrate this risk empirically, we construct a stimulus set where two of the chosen $\tau_k$ are strongly entangled. Intuitively, we create videos that show a ball moving in a fixed direction with speed $s$ at time $t$ defined by $s(t) = s_0\, r^t$. Here the frame-to-frame speed change is $|a(t)| = |r-1|\, s(t)$, exactly proportional to the speed, so speed and $|a|$ are perfectly correlated within the video. We assemble the final set by choosing how many videos of each profile type to include and solving a linear program for per-video weights as per the decorrelated stimulus procedure. The solution targets $0.9$ dataset-level mean within-video correlation of speed and $|a|$ while every other pair has near zero correlation. We then draw the starting movement direction uniformly. Table~\ref{tab:entangled_couplings} reports the couplings measured on the rendered videos.

\begin{table}[H]
\centering
\begin{tabular}{lrr}
\toprule
Pair & Mean within-video covariance & Mean within-video correlation \\
\midrule
speed--$|a|$ (target) & $+0.146$ & $+0.876$ \\
$|a|$--direction & $-0.000$ & $+0.124$ \\
speed--direction & $-0.000$ & $+0.117$ \\
$v_x$--direction & $-0.000$ & $+0.066$ \\
$v_y$--direction & $+0.001$ & $+0.060$ \\
speed--$v_x$ & $+0.001$ & $+0.014$ \\
speed--$v_y$ & $+0.003$ & $+0.009$ \\
$v_x$--$v_y$ & $-0.001$ & $+0.005$ \\
$|a|$--$v_y$ & $+0.001$ & $+0.004$ \\
$|a|$--$v_x$ & $+0.000$ & $-0.006$ \\
\bottomrule
\end{tabular}
\caption{Within-video couplings of the strongly entangled NFP set, recomputed from the 3000 rendered clips. The speed--$|a|$ pair has a  correlation of $0.88$ as rendered (target 0.9). Every other pair has covariance $\le 0.003$. Acceleration magnitude has a smaller within-video scale than speed (std $0.20$ versus $0.65$), since $|a(t)| = |v(t{+}1) - v(t)|$ is a difference of adjacent velocities.}
\label{tab:entangled_couplings}
\end{table}

The NFP test result using the strongly entangled NFP set can be found in Table~\ref{tab:entangled_conflation}. We see that when speed and acceleration have high correlation with each other, there is a significant overlap between the subsets of features sensitive to speed and to acceleration. If we look at the results without prior context, we might think that it is the nature of the model to focus encodings on speed and acceleration, and that all video models for some reason encode speed and acceleration through a joint mechanism, when in fact these numbers are just a coincidence of entanglement between speed and acceleration in the stimulus set. By decorrelating the stimulus set, we prevent such a misunderstanding from occurring.

\begin{table}[H]
\centering
\begin{tabular}{llrrrrrr}
\toprule
Model & Layer & speed & $v_x$ & $v_y$ & $|a|$ & dir & $|a|$ also speed \\
\midrule
VideoMAE finetuned & 11 & 51 & 17 & 20 & 59 & 0 & 45 (76\%) \\
VideoMAE & 11 & 32 & 0 & 0 & 43 & 0 & 26 (60\%) \\
TimeSformer & 11 & 2 & 0 & 0 & 2 & 0 & 2 (100\%) \\
V-JEPA2 & 23 & 64 & 0 & 0 & 62 & 0 & 50 (81\%) \\
DINOv2 & final & 0 & 0 & 0 & 0 & 0 & 0 \\
\bottomrule
\end{tabular}
\caption{NFP features flagged on the strongly entangled set. Every test uses the model's $\ell_1$ SAE dictionary applied at the residual stream after the listed layer's MLP; DINOv2 uses its SAE on the final-layer spatial patch tokens. Each feature is called significant by the within-video covariance $t$-test at Bonferroni $p < 0.05/D$ with non-zero covariance, where $D$ is the full dictionary size (6144, or 8192 for V-JEPA2).}
\label{tab:entangled_conflation}
\end{table}

\newpage
\section{Examples of NFP Test Videos}
\begin{figure}[h]
\centering
\includegraphics[width=0.9\linewidth]{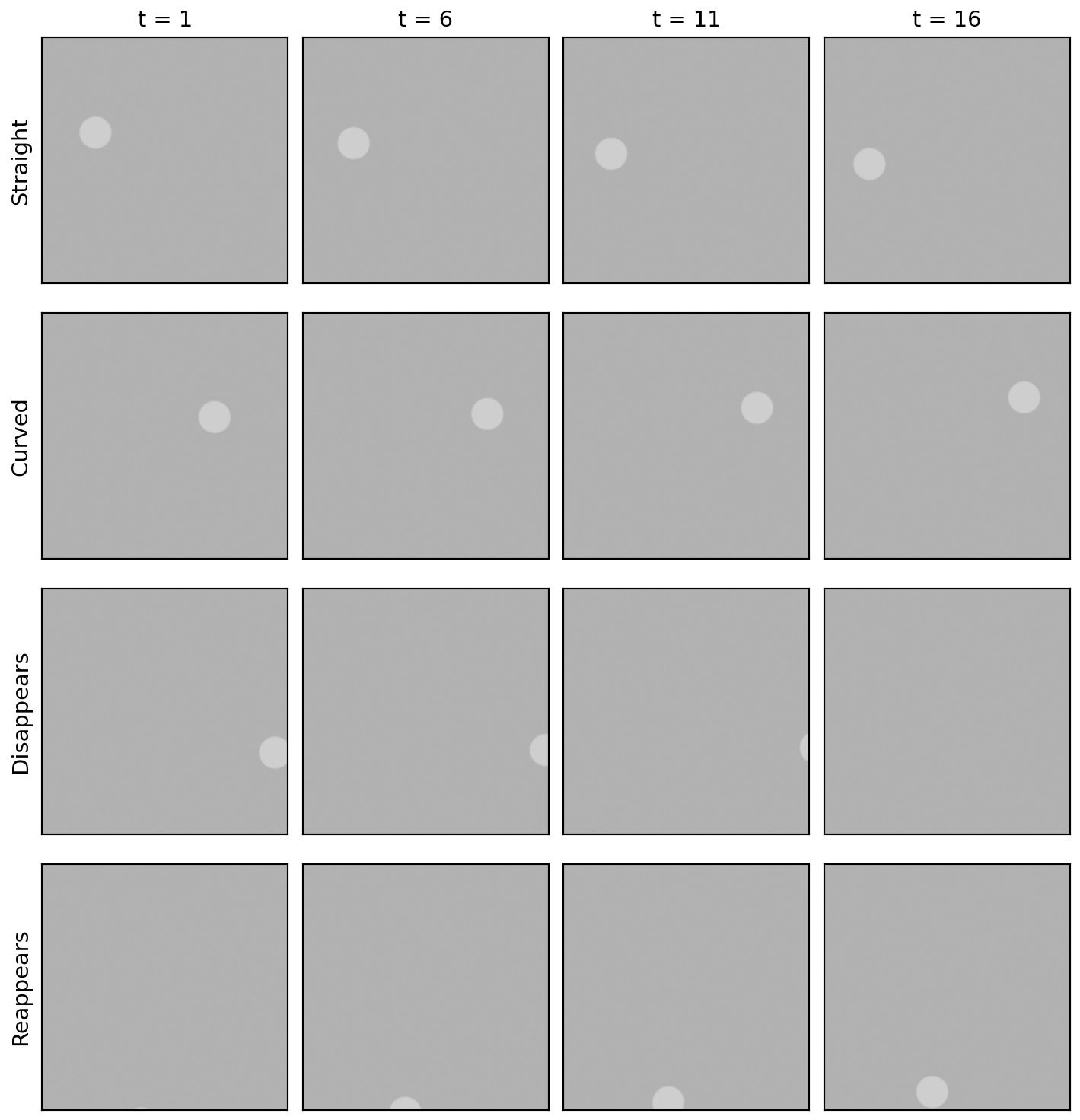}
\caption{Frame decomposition of four example NFP test videos across 4 time stamps. The NFP test videos vary in ball movement pattern and starting position but not colors or sizes of the ball or the background.}
\label{fig:video_examples}
\end{figure}
\end{document}